\newtheorem{theorem}{Theorem}
\newtheorem{lemma}{Lemma}
\newtheorem{definition}{Definition}
\def\b{\ensuremath\boldsymbol}
\icmltitlerunning{}
\begin{document}

\AddToShipoutPictureBG*{%
  \AtPageUpperLeft{%
    \setlength\unitlength{1in}%
    \hspace*{\dimexpr0.5\paperwidth\relax}
    \makebox(0,-0.75)[c]{\normalsize {\color{black} To appear as a part of an upcoming textbook on dimensionality reduction and manifold learning.}}
    }}

\twocolumn[
\icmltitle{Unified Framework for Spectral Dimensionality Reduction, Maximum Variance Unfolding, and Kernel Learning By Semidefinite Programming: Tutorial and Survey}

\icmlauthor{Benyamin Ghojogh}{bghojogh@uwaterloo.ca}
\icmladdress{Department of Electrical and Computer Engineering, 
\\Machine Learning Laboratory, University of Waterloo, Waterloo, ON, Canada}
\icmlauthor{Ali Ghodsi}{ali.ghodsi@uwaterloo.ca}
\icmladdress{Department of Statistics and Actuarial Science \& David R. Cheriton School of Computer Science, 
\\Data Analytics Laboratory, University of Waterloo, Waterloo, ON, Canada}
\icmlauthor{Fakhri Karray}{karray@uwaterloo.ca}
\icmladdress{Department of Electrical and Computer Engineering, 
\\Centre for Pattern Analysis and Machine Intelligence, University of Waterloo, Waterloo, ON, Canada}
\icmlauthor{Mark Crowley}{mcrowley@uwaterloo.ca}
\icmladdress{Department of Electrical and Computer Engineering, 
\\Machine Learning Laboratory, University of Waterloo, Waterloo, ON, Canada}

\icmlkeywords{Tutorial}

\vskip 0.3in
]

\begin{abstract}
This is a tutorial and survey paper on unification of spectral dimensionality reduction methods, kernel learning by Semidefinite Programming (SDP), Maximum Variance Unfolding (MVU) or Semidefinite Embedding (SDE), and its variants. We first explain how the spectral dimensionality reduction methods can be unified as kernel Principal Component Analysis (PCA) with different kernels. This unification can be interpreted as eigenfunction learning or representation of kernel in terms of distance matrix. Then, since the spectral methods are unified as kernel PCA, we say let us learn the best kernel for unfolding the manifold of data to its maximum variance. We first briefly introduce kernel learning by SDP for the transduction task. Then, we explain MVU in detail. Various versions of supervised MVU using nearest neighbors graph, by class-wise unfolding, by Fisher criterion, and by colored MVU are explained. We also explain out-of-sample extension of MVU using eigenfunctions and kernel mapping. Finally, we introduce other variants of MVU including action respecting embedding, relaxed MVU, and landmark MVU for big data.
\end{abstract}

\section{Introduction}

Dimensionality reduction algorithms can be divided into three categories which are spectral, probabilistic, and neural network-based methods \cite{ghojogh2021data}. Spectral dimensionality reduction methods deal with geometry of data and sometimes reduce to an eigenvalue or generalized eigenvalue problem \cite{ghojogh2019eigenvalue}. Various spectral methods have been proposed during decades. Some of the most well-known spectral methods are Principal Component Analysis (PCA) \cite{ghojogh2019unsupervised}, Multidimensional Scaling (MDS), Isomap \cite{ghojogh2020multidimensional}, spectral clustering, Laplacian eigenmap, diffusion map \cite{ghojogh2021laplacian}, and Locally Linear Embedding (LLE) \cite{ghojogh2020locally}.
After development of many of these methods, it was found out that the spectral methods are all learning eigenfunctions and can be reduced to kernel PCA with different kernels \cite{bengio2003learning,bengio2003spectral,bengio2003out,bengio2004learning,bengio2006spectral}. Moreover, from the formulation of MDS, we know that we can write kernel in terms of the distance matrix \cite{ghojogh2020multidimensional}. It was shown in \cite{ham2004kernel} that the spectral methods can be seen as kernel PCA with different kernels where kernels are constructed from various distance matrices. Hence, the spectral methods can be unified as kernel PCA theoretically. 
This kernel-based unified framework for the spectral dimensionality reduction encouraged the researchers to obtain the best kernel matrix for every specific dataset.

Around the time of discovery of this unification, it was found out in \cite{lanckriet2004learning} that the kernel matrix can be learned using Semidefinite Programming (SDP) \cite{vandenberghe1996semidefinite}. 
This kernel learning was proposed for the goal of transduction, i.e., learning the labels of an unlabeled part of data. 
The fact that the kernel can be learned by SDP inspired researchers to use SDP for learning the best kernel for dimensionality reduction and manifold unfolding. They said now that we knew the spectral methods can be seen as kernel PCA with different kernels, let us learn the best kernel for manifold unfolding to its maximum variance. Hence, Semidefinite Embedding (SDE) \cite{weinberger2004learning,weinberger2004unsupervised,weinberger2005nonlinear,weinberger2006unsupervised} was proposed which was later renamed to Maximum Variance Unfolding (MVU) \cite{weinberger2006introduction}.
MVU unfolds manifold to its maximum variance in its intrinsic dimensionality. 
For understanding the intuition of intrinsic dimensionality, see Fig. \ref{figure_intrinsic_dimensionality}. The task of manifold unfolding to its maximum variance is depicted by an example in Fig. \ref{figure_swill_roll_unfold}.
After proposal of MVU, various versions of MVU were developed such as supervised MVU \cite{zhang2005supervised,liu2005supervised,song2007colored,wei2016developments}, landmark MVU \cite{weinberger2005nonlinear}, action respecting embedding \cite{bowling2005action}, out-of-sample extensions \cite{chin2008out}, relaxed MVU \cite{hou2008relaxed}, etc. 

\begin{figure}[!t]
\centering
\includegraphics[width=3.2in]{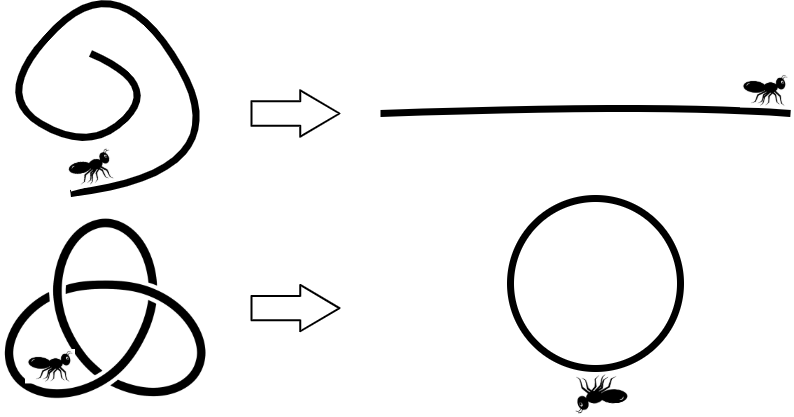}
\caption{The intrinsic dimensionality for Swiss roll (above) and trefoil knot (below). Suppose a small ant, which cannot see the whole manifold together and can only see its front, traverses the manifold once completely. The dimensionality that the ant feels by this traversing is the intrinsic dimensionality. For example, the intrinsic dimensionalities of 2D Swiss roll and 3D trefoil knot are one and two, respectively, because the ant reaches an end of line in the former (so it is like a 1D line) and the ant reaches to its starting point in the latter (so it is like a 2D circle).}
\label{figure_intrinsic_dimensionality}
\end{figure}

\begin{figure*}[!t]
\centering
\includegraphics[width=6.5in]{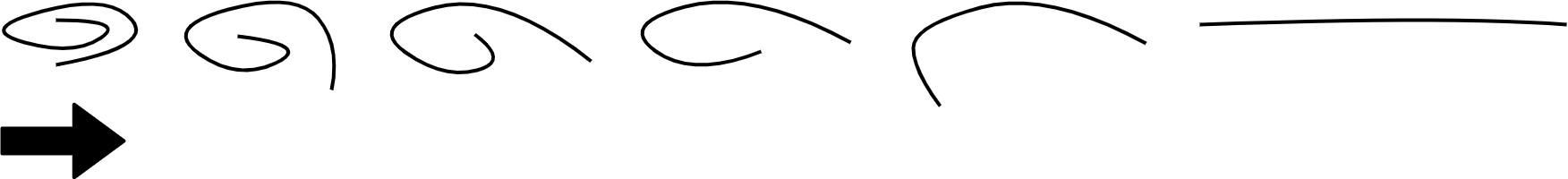}
\caption{Iterative unfolding of a nonlinear Swiss roll manifold using maximum variance unfolding.}
\label{figure_swill_roll_unfold}
\end{figure*}

The remainder of this paper is organized as follows. We explain unification of spectral dimensionality reduction methods as kernel PCA in Section \ref{section_unified_framework}. Required background on SDP is introduced in Section \ref{section_semidefinite_programming_background}. We introduce kernel learning by SDP for the transduction task in Section \ref{section_kernel_learning_transduction}. MVU or SDE is explained in detail in Section \ref{section_MVU}. We explain supervised MVU using nearest neighbors graph, by class-wise unfolding, by Fisher criterion, and by colored MVU in Section \ref{section_supervised_MVU}. Out-of-sample extensions of MVU using eigenfunctions and kernel mapping are explained in Section \ref{section_MVU_outofsample}. Some other variants of MVU including action respecting embedding, relaxed MVU, and landmark MVU for big data are introduced in Section \ref{section_MVU_other_variants}. Finally, Section \ref{section_conclusion} concludes the paper. 

\section*{Required Background for the Reader}

This paper assumes that the reader has general knowledge of calculus, linear algebra, and basics of optimization. 

\section{Unified Framework for Spectral Methods}\label{section_unified_framework}

After development of many spectral dimensionality reduction methods, it was found out by researchers that these methods all can be unified as kernel Principal Component Analysis (PCA) \cite{ghojogh2019unsupervised} with different kernels. This is mostly because the spectral methods reduce to an eigenvalue or a generalized eigenvalue problem \cite{ghojogh2019eigenvalue}. 
This unification can be analyzed from different perspectives including eigenfunction learning, having kernels in kernel PCA, and generalized embedding. In the following, we explain these points of view. 

\subsection{Learning Eigenfunctions}

After some initial work was done in \cite{williams2000effect} on using eigenfunctions in machine learning, Bengio et. al. developed eigenfunctions learning and showed that the spectral methods actually learn eigenfunctions \cite{bengio2003learning,bengio2003spectral,bengio2003out,bengio2004learning,bengio2006spectral}. 
In the following, we repeat some contents from our other tutorial \cite{ghojogh2021reproducing} for the sake of explanation. Refer to \cite{ghojogh2021reproducing} for more details and proofs about eigenfunctions and embedding using eigenfunctions.

First, consider The eigenvalue problem for the kernel matrix \cite{ghojogh2019eigenvalue}: 
\begin{align}\label{equation_eigenvalue_problem_kernel_operator}
\b{K} \b{v}_k = \delta_k \b{v}_k, \quad \forall k \in \{1, \dots, n\},
\end{align}
where $\b{v}_k$ is the $k$-th eigenvector and $\delta_k$ is the corresponding eigenvalue. 

\begin{definition}[Eigenfunction \cite{ghojogh2021reproducing}]\label{definition_eigenfunction}
Consider a linear operator $O$ which can be applied on a function $f$. If applying this operator on the function results in a multiplication of function to a constant:
\begin{align}\label{equation_eigenfunction}
O f = \lambda f,
\end{align}
then the function $f$ is an eigenfunction for the operator $O$ and the constant $\lambda$ is the corresponding eigenvalue.
\end{definition}
Now, in the feature space $\mathcal{H}$, consider a kernel operator $K_p$ as \cite{williams2000effect}, {\citep[Section 3]{bengio2003out}}:
\begin{align}\label{equation_K_operator_integral}
(K_p f)(\b{x}) := \int k(\b{x},\b{y})\, f(\b{y})\, p(\b{y})\, d\b{y},
\end{align}
where $f \in \mathcal{H}$ and the density function $p(\b{y})$ can be approximated empirically. 
The eigenfunction problem for this kernel operator is \cite{ghojogh2021reproducing}:
\begin{align}
&(K_p f_k)(\b{x}) = \lambda_k f_k(\b{x}), \quad \forall k \in \{1, \dots, n\},
\end{align}
where $f_k(.)$ is the $k$-th eigenfunction and $\lambda_k$ is the corresponding eigenvalue.

\begin{lemma}[Relation of Eigenfunctions and Eigenvectors for Kernel {\citep[Proposition 1]{bengio2003out}}, {\citep[Theorem 1]{bengio2003spectral}}]\label{lemma_relation_eigenfunctions_eigenvectors_for_kernel}
Consider a training dataset $\{\b{x}_i \in \mathbb{R}^d\}_{i=1}^n$ and the eigenvalue problem (\ref{equation_eigenvalue_problem_kernel_operator}) where $\b{v}_k \in \mathbb{R}^n$ and $\delta_k$ are the $k$-th eigenvector and eigenvalue of matrix $\b{K} \in \mathbb{R}^{n \times n}$.
If $v_{ki}$ is the $i$-th element of vector $\b{v}_k$, the eigenfunction for the point $\b{x}$ and the $i$-th training point $\b{x}_i$ are:
\begin{align}
&f_k(\b{x}) = \frac{\sqrt{n}}{\delta_k} \sum_{i=1}^n v_{ki}\, \breve{k}(\b{x}_i, \b{x}), \label{equation_relation_eigenfunction_eigenvector_x} \\
&f_k(\b{x}_i) = \sqrt{n}\, v_{ki}, \label{equation_relation_eigenfunction_eigenvector_x_i}
\end{align}
respectively, where $\breve{k}(\b{x}_i, \b{x})$ is the centered kernel. 
If $\b{x}$ is a training point, $\breve{k}(\b{x}_i, \b{x})$ is the centered kernel over training data and if $\b{x}$ is an out-of-sample point, then $\breve{k}(\b{x}_i, \b{x}) = \breve{k}_t(\b{x}_i, \b{x})$ is between training set and the out-of-sample point (n.b. see \cite{ghojogh2021reproducing} for information on kernel centering). 
\end{lemma}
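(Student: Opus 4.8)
The plan is to discretize the continuous eigenfunction problem $(K_p f_k)(\b{x}) = \lambda_k f_k(\b{x})$ using an empirical approximation of the density $p$, and then to match the resulting finite-dimensional relation against the matrix eigenvalue problem (\ref{equation_eigenvalue_problem_kernel_operator}). Since the target formulas involve the \emph{centered} kernel, I would take the kernel appearing in the operator (\ref{equation_K_operator_integral}) to be $\breve{k}$ (as is appropriate for kernel PCA), so that $\b{K}$ is the centered Gram matrix throughout. First I would approximate the density by the empirical distribution on the training set, $p(\b{y}) \approx (1/n) \sum_{i=1}^n \delta(\b{y} - \b{x}_i)$, which collapses the integral in (\ref{equation_K_operator_integral}) to a sum and turns the eigenfunction equation into
\begin{align}\label{equation_evaluate_and_sum}
\frac{1}{n} \sum_{i=1}^n \breve{k}(\b{x}_i, \b{x})\, f_k(\b{x}_i) = \lambda_k\, f_k(\b{x}).
\end{align}
This is the key reduction: the continuous operator problem becomes an evaluate-and-sum relation that holds at \emph{any} point $\b{x}$.

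Next I would restrict (\ref{equation_evaluate_and_sum}) to the training points $\b{x} = \b{x}_j$. Stacking the values $f_k(\b{x}_1), \dots, f_k(\b{x}_n)$ into a vector $\b{\phi}_k$, the relation reads $(1/n)\, \b{K} \b{\phi}_k = \lambda_k \b{\phi}_k$, i.e. $\b{K} \b{\phi}_k = (n \lambda_k)\, \b{\phi}_k$. Comparing with (\ref{equation_eigenvalue_problem_kernel_operator}), the vector $\b{\phi}_k$ must be an eigenvector of $\b{K}$; identifying it with the $k$-th matrix eigenvector $\b{v}_k$ gives the two correspondences $\delta_k = n \lambda_k$ and $f_k(\b{x}_i) \propto v_{ki}$.

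To pin down the proportionality constant, I would impose the natural normalization that each eigenfunction has unit norm with respect to the same empirical measure, namely $\int f_k(\b{y})^2\, p(\b{y})\, d\b{y} \approx (1/n) \sum_{i=1}^n f_k(\b{x}_i)^2 = 1$. Writing $f_k(\b{x}_i) = c\, v_{ki}$ and using that $\b{v}_k$ is a unit-norm matrix eigenvector yields $c^2 / n = 1$, hence $c = \sqrt{n}$, which is exactly (\ref{equation_relation_eigenfunction_eigenvector_x_i}). Substituting $f_k(\b{x}_i) = \sqrt{n}\, v_{ki}$ and $\lambda_k = \delta_k / n$ back into (\ref{equation_evaluate_and_sum}) then gives
\begin{align}
f_k(\b{x}) = \frac{1}{n \lambda_k} \sum_{i=1}^n \sqrt{n}\, v_{ki}\, \breve{k}(\b{x}_i, \b{x}) = \frac{\sqrt{n}}{\delta_k} \sum_{i=1}^n v_{ki}\, \breve{k}(\b{x}_i, \b{x}),
\end{align}
which is (\ref{equation_relation_eigenfunction_eigenvector_x}). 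The same formula holds verbatim for an out-of-sample $\b{x}$ precisely because (\ref{equation_evaluate_and_sum}) was derived for arbitrary $\b{x}$; one only replaces the training kernel by the train-to-test centered kernel $\breve{k}_t(\b{x}_i, \b{x})$.

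The main obstacle I anticipate is bookkeeping rather than conceptual: keeping the factor of $n$ consistent between the $1/n$ in the empirical density and the $1/n$ in the normalization, and ensuring the centering is handled uniformly so that $\breve{k}$ (rather than $k$) appears everywhere, including in the out-of-sample train-to-test block. The $\sqrt{n}$ in both formulas is entirely an artifact of the unit-norm convention for $\b{v}_k$ combined with the empirical measure placing mass $1/n$ on each point; a different normalization of the eigenvectors would rescale these constants accordingly.
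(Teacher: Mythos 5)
Your proposal is correct, and it is essentially the standard argument: the paper itself gives no proof of this lemma (it defers to \citet{bengio2003out}, \citet{bengio2003spectral}, and \citet{ghojogh2021reproducing}), and the proof in those sources is exactly your Nystr\"om-style discretization --- replace $p$ by the empirical measure, restrict the evaluate-and-sum relation to the training points to identify $\b{K}\b{\phi}_k = n\lambda_k\b{\phi}_k$ with Eq.~(\ref{equation_eigenvalue_problem_kernel_operator}), and fix the $\sqrt{n}$ by the empirical unit-norm convention $\frac{1}{n}\sum_i f_k(\b{x}_i)^2 = 1$ against $\|\b{v}_k\|_2 = 1$. The only caveats worth recording are that the identification $\b{\phi}_k \leftrightarrow \b{v}_k$ is unique (up to sign) only for simple eigenvalues, and that your reading of the operator (\ref{equation_K_operator_integral}) as built from the centered kernel $\breve{k}$ is the intended one, since the lemma's conclusion is stated in terms of $\breve{k}$.
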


\begin{theorem}[Embedding from Eigenfunctions of Kernel Operator {\citep[Proposition 1]{bengio2003out}}, {\citep[Section 4]{bengio2003spectral}}]\label{theorem_embedding_from_eigenfunctions}
Consider a dimensionality reduction algorithm which embeds data into a low-dimensional embedding space.
Let the embedding of the point $\b{x}$ be $\mathbb{R}^p \ni \b{y}(\b{x}) = [y_1(\b{x}), \dots, y_p(\b{x})]^\top$ where $p \leq n$. The $k$-th dimension of this embedding is:
\begin{align}\label{equation_embedding_eigenfunction}
y_k(\b{x}) &= \sqrt{\delta_k}\, \frac{f_k(\b{x})}{\sqrt{n}} = \frac{1}{\sqrt{\delta_k}} \sum_{i=1}^n v_{ki}\, \breve{k}(\b{x}_i, \b{x}),
\end{align}
where $\breve{k}(\b{x}_i, \b{x})$ is the centered training or out-of-sample kernel depending on whether $\b{x}$ is a training or an out-of-sample point.
\end{theorem}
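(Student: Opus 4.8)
The plan is to recognize that the two right-hand expressions in \eqref{equation_embedding_eigenfunction} are equivalent by Lemma~\ref{lemma_relation_eigenfunctions_eigenvectors_for_kernel}, so that the substantive work is to derive one of them from the definition of the kernel PCA embedding. I would take the standard kernel PCA route: under the unified framework, the $k$-th coordinate of the embedding of a point $\b{x}$ is the projection of its (centered) feature map $\phi(\b{x})$ onto the $k$-th principal direction $\b{u}_k$ in the feature space $\mathcal{H}$, i.e.\ $y_k(\b{x}) = \langle \b{u}_k, \phi(\b{x}) \rangle$.

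First I would pin down the form and normalization of $\b{u}_k$. Writing $\b{\Phi} := [\phi(\b{x}_1), \dots, \phi(\b{x}_n)]$ so that the centered kernel matrix is $\b{K} = \b{\Phi}^\top \b{\Phi}$, I would verify that if $\b{K} \b{v}_k = \delta_k \b{v}_k$ with $\|\b{v}_k\| = 1$, then $\b{u}_k := \delta_k^{-1/2}\, \b{\Phi} \b{v}_k$ is a unit-norm principal direction, since $\|\b{\Phi} \b{v}_k\|^2 = \b{v}_k^\top \b{K} \b{v}_k = \delta_k$. This fixes the $\delta_k^{-1/2}$ scaling that makes the formula come out right.

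Second, I would compute the projection directly,
\begin{align*}
y_k(\b{x}) = \langle \b{u}_k, \phi(\b{x}) \rangle = \frac{1}{\sqrt{\delta_k}} \b{v}_k^\top \b{\Phi}^\top \phi(\b{x}) = \frac{1}{\sqrt{\delta_k}} \sum_{i=1}^n v_{ki}\, \breve{k}(\b{x}_i, \b{x}),
\end{align*}
which is the second expression in \eqref{equation_embedding_eigenfunction}; the centered kernel $\breve{k}$ appears because the feature maps are centered, and the out-of-sample case is covered by the same formula with $\breve{k}_t$ in place of $\breve{k}$. The first equality then follows by pure algebra: substituting $\sum_{i=1}^n v_{ki}\, \breve{k}(\b{x}_i, \b{x}) = (\delta_k/\sqrt{n})\, f_k(\b{x})$, which is just a rearrangement of \eqref{equation_relation_eigenfunction_eigenvector_x}, yields $y_k(\b{x}) = \sqrt{\delta_k}\, f_k(\b{x})/\sqrt{n}$. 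As a consistency check I would confirm that on a training point the formula collapses to the familiar kernel PCA coordinate $y_k(\b{x}_i) = \sqrt{\delta_k}\, v_{ki}$, using $f_k(\b{x}_i) = \sqrt{n}\, v_{ki}$ from \eqref{equation_relation_eigenfunction_eigenvector_x_i}.

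The main obstacle I anticipate is conceptual rather than computational: justifying the kernel PCA projection formula when $\mathcal{H}$ is potentially infinite-dimensional, i.e.\ making precise in what sense $\b{u}_k = \delta_k^{-1/2}\, \b{\Phi} \b{v}_k$ is a principal direction and that its projection coefficients are exactly $\sqrt{\delta_k}\, v_{ki}$ on the training set. Because the surrounding text already treats every spectral method as kernel PCA and Lemma~\ref{lemma_relation_eigenfunctions_eigenvectors_for_kernel} supplies the eigenfunction–eigenvector correspondence, I would invoke this correspondence rather than rederive it, so that the proof reduces essentially to the substitution above.
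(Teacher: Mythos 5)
Your proposal is correct. Note that the paper itself offers no in-text proof of this theorem: it is stated with citations to Bengio et al.\ and to the companion tutorial, so there is nothing to match step for step. Your route --- projecting the centered feature map onto the unit-norm principal direction $\b{u}_k = \delta_k^{-1/2}\,\b{\Phi}\b{v}_k$ (with $\|\b{\Phi}\b{v}_k\|^2 = \b{v}_k^\top \b{K} \b{v}_k = \delta_k$ fixing the normalization) and then converting to the eigenfunction form via Lemma~\ref{lemma_relation_eigenfunctions_eigenvectors_for_kernel} --- is the standard kernel PCA derivation, and it is consistent with the only related justification the paper does supply, namely the eigendecomposition argument in the Spectral Embedding subsection ($\b{K}_{ij} = \sum_k \delta_k v_{ki} v_{kj} = \b{y}_i^\top \b{y}_j \Rightarrow \b{y}_i(k) = \sqrt{\delta_k}\, v_{ki}$), which only covers training points. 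Your projection-based argument buys exactly what that argument lacks: it applies verbatim to an out-of-sample $\b{x}$ by replacing $\breve{k}$ with $\breve{k}_t$, which is the substantive content of the theorem. The cited source proceeds in the opposite direction --- it \emph{defines} $y_k(\b{x}) := \sqrt{\delta_k}\, f_k(\b{x})/\sqrt{n}$ from the eigenfunction/Nystr\"om side and then checks it reduces to the kernel PCA coordinate on training data --- but since either direction is a one-line rearrangement of Eqs.\ (\ref{equation_relation_eigenfunction_eigenvector_x})--(\ref{equation_relation_eigenfunction_eigenvector_x_i}), the two presentations are equivalent; your consistency check $y_k(\b{x}_i) = \sqrt{\delta_k}\, v_{ki}$ is exactly that reduction. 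The one point to keep explicit is that both $\b{\Phi}$ and $\phi(\b{x})$ must be centered with respect to the training mean for $\breve{k}$ (respectively $\breve{k}_t$) to appear, as you note.
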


The Theorem \ref{theorem_embedding_from_eigenfunctions} has been widely used for out-of-sample (test data) embedding in many spectral dimensionality reduction algorithms \cite{bengio2003out}. This theorem also explains why spectral methods can all be seen as kernel PCA for learning eigenfunctions. 
In the following, we justify this insight into unification of the spectral methods.

\subsection{Unified Framework as Kernel PCA}

Many spectral dimensionality reduction methods can be reduced to kernel PCA where the eigenvectors of the kernel matrix or eigenfunctions of the kernel operator are used for embedding as was stated in Lemma \ref{lemma_relation_eigenfunctions_eigenvectors_for_kernel} and Theorem \ref{theorem_embedding_from_eigenfunctions}.
This unification was analyzed in the following two categories of papers:
\begin{itemize}
\item papers for eigenfunction learning: \cite{bengio2003learning,bengio2003spectral,bengio2003out,bengio2004learning,bengio2006spectral}
\item papers for unification by kernels constructed from distance matrices: \cite{ham2004kernel} and {\citep[Table 2.1]{strange2014open}} and \cite{ghojogh2019feature}
\end{itemize}
In the following, we explain this unification of spectral methods as kernel PCA using both approaches together. 

\hfill\break
\textbf{-- Principal Component analysis (PCA):}

As kernel PCA makes use of kernel trick for kernelization, it is equivalent to PCA when a linear kernel is used \cite{ghojogh2021reproducing}. Hence, PCA \cite{ghojogh2019unsupervised} is equivalent to kernel PCA by the linear kernel:
\begin{align}
\b{K} = \b{X}^\top \b{X}.
\end{align}

\textbf{-- Multidimensional Scaling (MDS):}

According to our derivations in \cite{ghojogh2020multidimensional} or \cite{ghojogh2021reproducing}, MDS \cite{cox2008multidimensional} is equivalent to kernel PCA with the kernel:
\begin{align}
\b{K} = -\frac{1}{2} \b{H} \b{D} \b{H},
\end{align}
where $\b{D}$ is the squared Euclidean distance matrix and $\b{H} := \b{I} - (1/n) \b{1} \b{1}^\top$ is the centering matrix.

\hfill\break
\textbf{-- Spectral Clustering:}

Let $\b{W} \in \mathbb{R}^{n \times n}$ be the adjacency matrix of points \cite{ghojogh2021laplacian}. Spectral clustering \cite{weiss1999segmentation,ng2001spectral,ghojogh2021laplacian} uses the normalized adjacency matrix as its kernel matrix. Suppose $\b{D}_{i,i} := \sum_{j=1}^n \b{W}_{ij}$ is the $(i,i)$-th element of the diagonal degree matrix of adjacency matrix. 
Hence, the $(i,j)$-th element of kernel is \cite{weiss1999segmentation,bengio2003out}:
\begin{align}\label{equation_kernel_spectral_clustering_normalized}
\b{K}_{ij} = \frac{\b{W}_{ij}}{\sqrt{\b{D}_{ii}\, \b{D}_{jj}}},
\end{align}
where $\b{K}_{ij} = k(\b{x}_i, \b{x}_j)$ for the kernel function $k(.,.)$.
This relation of kernel matrix with the adjacency matrix makes sense because kernel is a notion of similarity \cite{ghojogh2021reproducing}.

\hfill\break
\textbf{-- Laplacian Eigenmap:}

The solution of Laplacian eigenmap \cite{belkin2001laplacian,ghojogh2021laplacian} is the generalized eigenvalue problem $(\b{L}, \b{D})$ \cite{ghojogh2019eigenvalue} where $\b{L} := \b{D} - \b{W}$ is the Laplacian of the adjacency matrix $\b{W}$. According to {\citep[Normalization Lemma 1]{weiss1999segmentation}} , the eigenvectors of the generalized eigenvalue problem $(\b{L}, \b{D})$ are equivalent to the eigenvectors of the normalized kernel in Eq. (\ref{equation_kernel_spectral_clustering_normalized}).

In addition to the above analysis \cite{bengio2003out}, there exists another analysis for Laplacian eigenmap stated in \cite{ham2004kernel}. Consider the Laplacian of graph of data. It can, for example, be the Laplacian of RBF adjacency matrix. It was mentioned above that the solution of Laplacian eigenmap is the generalized eigenvalue problem $(\b{L}, \b{D})$. However, there exists another form of optimization for Laplacian eigenmap whose solution is the eigenvalue problem for $\b{L}$ \cite{ghojogh2021laplacian}. As the optimization of Laplacian eigenmap is minimization, its eigenvectors are sorted from the smallest to largest eigenvalues \cite{ghojogh2019eigenvalue}. However, the optimization in PCA and kernel PCA is maximization and the eigenvectors are sorted from the largest to smallest eigenvalues. 
Hence, to have equivalency with kernel PCA, we should modify minimization to maximization. One way to do this is to replace $\b{L}$ with \cite{ham2004kernel}:
\begin{align}\label{equation_kernel_inverse_Laplacian}
\mathbb{R}^{n \times n} \ni \b{K} = \b{L}^\dagger,
\end{align}
where $^\dagger$ denotes the pseudo-inverse of matrix. This replacement changes minimization to maximization because the Laplacian matrix is positive semidefinite \cite{ghojogh2021laplacian} and eigenvalues of inverse of a positive semidefinite matrix are equivalent to the reciprocal of eigenvalues that matrix while the eigenvectors remain the same. Hence, the order of eigenvalues and eigenvectors become reversed.
It is also important that the pseudo-inverse of Laplacian used as the kernel should be double-centered because kernel is double-centered in kernel PCA \cite{ghojogh2019unsupervised}. It is interesting that Eq. (\ref{equation_kernel_inverse_Laplacian}) is already double-centered \cite{ham2004kernel}:
\begin{align*}
\b{L} \b{1} = \b{L}^\dagger \b{1} = \b{0} \implies \b{H} \b{L}^\dagger \b{H} = \b{L}^\dagger,
\end{align*}
where $\b{H} := \b{I} - (1/n) \b{1} \b{1}^\top$ is the centering matrix. Note that $\b{L} \b{1} = \b{L}^\dagger \b{1} = \b{0}$ holds because the row summation of Laplacian matrix is zero \cite{ghojogh2021laplacian}.

\hfill\break
\textbf{-- Isomap:}

The kernel in Isomap is \cite{tenenbaum2000global,ghojogh2020multidimensional}:
\begin{align}
\b{K} = -\frac{1}{2} \b{H} \b{D}^{(g)} \b{H},
\end{align}
where $\b{D}^{(g)}$ is the geodesic distance matrix whose elements are the (squared) approximation of geodesic distances using piece-wise Euclidean distances \cite{ghojogh2020multidimensional}. Hence, it can be seen as kernel PCA with the above-mentioned kernel \cite{ham2004kernel}. 

\hfill\break
\textbf{-- Locally Linear Embedding (LLE):}



The solution of LLE is the eigenvectors of $\b{M} := (\b{I} - \b{W})^\top (\b{I} - \b{W})$ where $\b{I}$ is the identity matrix and $\b{W}$ is the weight matrix obtained from the linear reconstruction step in LLE \cite{roweis2000nonlinear,ghojogh2020locally}. As the optimization of LLE is minimization, its eigenvalues are sorted from smallest to largest \cite{ghojogh2019eigenvalue}. However, PCA and kernel PCA have maximization in their formulation. Hence, to have equivalency with kernel PCA, we should modify minimization to maximization. One way to do this is to replace $\b{M}$ with \cite{scholkopf2002learning,bengio2003learning}:
\begin{align}\label{equation_kernel_I_minus_M_LLE}
\mathbb{R}^{n \times n} \ni \b{K} = \mu \b{I} - \b{M},
\end{align}
where $\mu > 0$. 
It is suggested in \cite{ham2004kernel} to set $\mu$ to the largest eigenvalue $\lambda_\text{max}$ of $\b{M}$ so that the kernel $\b{K}$ becomes positive definite. 
This replacement of $\b{M}$ with the provided kernel changes minimization to maximization because:
\begin{align*}
\arg\max(\b{K}) \overset{(\ref{equation_kernel_I_minus_M_LLE})}{=} \arg\max(-\b{M}) = \arg\min(\b{M}).
\end{align*}
Another possible replacement exists and that is replacing $\b{M}$ with its pseudo-inverse {\citep[Section 2.1]{alipanahi2011guided}}:
\begin{align}\label{equation_kernel_inverse_M}
\mathbb{R}^{n \times n} \ni \b{K} = \b{M}^\dagger,
\end{align}
whose justification is similar to justification of pseudo-inverse of Laplacian for Laplacian eigenmap. 

\hfill\break
\textbf{-- Diffusion Map:}

Diffusion map \cite{coifman2006diffusion} is a Laplacian-based method \cite{ghojogh2021laplacian} and can be seen as a special case of kernel PCA with a specific kernel \cite{ham2004kernel}, {\citep[Chapter 2]{strange2014open}}.
Let $\b{M}$ denote the random-walk graph Laplacian normalization of Laplacian, i.e., $\b{M} := (\b{D}^{(\alpha)})^{-1} \b{L}^{(\alpha)}$ where $\alpha$ is a parameter. At time $t$, we can say that $\b{M}^t$ is the probability of going from $\b{x}_i$ to $\b{x}_j$. The solution of diffusion map at time $t$ is the eigenvectors of $\b{M}^t$ sorted from largest to smallest eigenvalues \cite{ghojogh2021laplacian}. Hence, we can consider the following kernel for equivalency of diffusion map and kernel PCA:
\begin{align}
\mathbb{R}^{n \times n} \ni \b{K} = \b{M}^t,
\end{align}
because kernel PCA also has maximization in its formulation. 

\subsection{Summary of Kernels in Spectral Methods}

As we saw, kernels in some algorithms such as PCA and MDS are almost closed-form and can be computed almost fast. However, some methods such as Isomap require a piece of code to calculate their kernels. 
We can have a summary of kernels for unification of spectral dimensionality reduction methods as kernel PCA. A similar summary exists in {\citep[Table 2.1]{strange2014open}}.
We summarize the kernels in Table \ref{table_unification_spectral_kernels}.

\begin{table}[!t]
\caption{The summary of kernels for unification of spectral methods as kernel PCA \hfill\break}
\label{table_unification_spectral_kernels}
\renewcommand{\arraystretch}{1.3}  
\centering
\scalebox{1}{    
\begin{tabular}{l | c}
\hline
\hline
Method & Kernel \\
\hline
PCA & $\b{X}^\top \b{X}$ \\
\hline
MDS & $-\frac{1}{2} \b{H} \b{D} \b{H}$ \\
\hline
Spectral clustering & $\frac{\b{W}_{ij}}{\sqrt{\b{D}_{ii}\, \b{D}_{jj}}}$ \\
\hline
Laplacian eigenmap & $\b{L}^\dagger$ \,or\, $\frac{\b{W}_{ij}}{\sqrt{\b{D}_{ii}\, \b{D}_{jj}}}$ \\
\hline
Isomap & $-\frac{1}{2} \b{H} \b{D}^{(g)} \b{H}$ \\
\hline
LLE & $\b{M}^\dagger$ \,or\, $\lambda_\text{max} \b{I} - \b{M}$ \\
\hline
Diffusion map & $\b{M}^t$ \\
\hline
\hline
\end{tabular}%
}
\end{table}

\subsection{Generalized Embedding}

It was demonstrated above that many of the spectral dimensionality reduction methods belong to a unified framework. Therefore, there can be generalized embedding methods which generalize the spectral methods to broader algorithms.
Graph Embedding (GE) \cite{yan2005graph,yan2006graph,ghojogh2021laplacian} showed that many spectral methods, including Laplacian eigenmap, locality preserving projection, PCA and kernel PCA, Fisher Discriminant Analysis (FDA) and kernel FDA, MDS, Isomap, and LLE, are special cases of a model named graph embedding. Another generalized subspace learning method, named Roweis Discriminant Analysis (RDA), also generalized PCA, supervised PCA, and FDA as unified methods. These generalized methods justify from another perspective that why spectral methods can be unified. 

\section{Background on Semidefinite Programming}\label{section_semidefinite_programming_background}

Here, we review some optimization background required for solving SDP used in MVU and kernel learning. 

\subsection{Unconstrained Optimization}

Consider the following optimization problem:
\begin{align}
\underset{\b{x}}{\text{minimize}}\quad f(\b{x}).
\end{align}
where $f(.)$ is a convex function.
Iterative optimization can be first-order or second-order. Iterative optimization updates solution iteratively:
\begin{align}
\b{x} \gets \b{x} + \Delta\b{x},
\end{align}
until $\Delta\b{x}$ becomes very small which is the convergence of optimization.
In the first-order optimization, the step of updating is $\Delta\b{x} := -\nabla f(\b{x})$ where $\nabla f(.)$ denotes the gradient of function.
Near the optimal point $\b{x}^*$, gradient is very small so the second-order Taylor series expansion of function becomes:
\begin{align*}
f(\b{x}) \approx &\,f(\b{x}^*) + \underbrace{\nabla f(\b{x}^*)^\top}_{\approx\, 0} (\b{x} - \b{x}^*) \\
&+ \frac{1}{2} (\b{x} - \b{x}^*)^\top \nabla^2 f(\b{x}^*) (\b{x} - \b{x}^*) \\
&\approx f(\b{x}^*) + \frac{1}{2} (\b{x} - \b{x}^*)^\top \nabla^2 f(\b{x}^*) (\b{x} - \b{x}^*),
\end{align*}
where $\nabla^2 f(\b{x})$ denotes the second-order gradient (Hessian) of function.
This shows that the function is almost quadratic near the optimal point. Following this intuition, Newton's method uses Hessian $\nabla^2 f(\b{x})$ in its updating step:
\begin{align}
\Delta\b{x} := - \nabla^2 f(\b{x})^{-1} \nabla f(\b{x}).
\end{align}

\subsection{Equality Constrained Optimization}

The optimization problem may have equality constraint:
\begin{equation}\label{equation_optimization_problem_equality_constraint}
\begin{aligned}
& \underset{\b{x}}{\text{minimize}}
& & f(\b{x}) \\
& \text{subject to}
& & \b{A} \b{x} = \b{b}.
\end{aligned}
\end{equation}
After a step of update by $\Delta \b{x} = \b{u}$, this optimization becomes:
\begin{equation}
\begin{aligned}
& \underset{\b{x}}{\text{minimize}}
& & f(\b{x} + \b{u}) \\
& \text{subject to}
& & \b{A} (\b{x} + \b{u}) = \b{b}.
\end{aligned}
\end{equation}

The Lagrangian of this optimization problem is \cite{boyd2004convex}:
\begin{align*}
\mathcal{L} = f(\b{x} + \b{u}) + \b{\nu}^\top (\b{A} (\b{x} + \b{u}) - \b{b}),
\end{align*}
where $\b{\nu}$ is the dual variable. 
The second-order Taylor series expansion of function $f(\b{x} + \b{u})$ is:
\begin{align}
f(\b{x} + \b{u}) \approx f(\b{x}) + \nabla f(\b{x})^\top \b{u} + \frac{1}{2} \b{u}^\top \nabla^2 f(\b{x}^*)\, \b{u}.
\end{align}
Substituting this into the Lagrangian gives:
\begin{align*}
\mathcal{L} = f(\b{x}) &+ \nabla f(\b{x})^\top \b{u} + \frac{1}{2} \b{u}^\top \nabla^2 f(\b{x}^*)\, \b{u} \\
&+ \b{\nu}^\top (\b{A} (\b{x} + \b{u}) - \b{b}).
\end{align*}
According to Karush–Kuhn–Tucker (KKT) conditions, the primal and dual residuals must be zero:
\begin{align}
& \nabla_{\b{x}} \mathcal{L} = \nabla f(\b{x}) + \nabla^2 f(\b{x})^\top \b{u} + \b{u}^\top \underbrace{\nabla^3 f(\b{x}^*)}_{\approx\, \b{0}}\, \b{u} \nonumber \\
&+ \b{A}^\top \b{\nu} \overset{\text{set}}{=} \b{0} \implies \nabla^2 f(\b{x})^\top \b{u} + \b{A}^\top \b{\nu} = - \nabla f(\b{x}), \label{equation_Newton_method_derivative_x_zero} \\
& \nabla_{\b{\nu}} \mathcal{L} = \b{A} (\b{x} + \b{u}) - \b{b} \overset{(a)}{=} \b{A} \b{u} \overset{\text{set}}{=} \b{0}, \label{equation_Newton_method_derivative_nu_zero}
\end{align}
where we have $\nabla^3 f(\b{x}^*) \approx 0$ because the gradient of function at the optimal point vanishes and $(a)$ is because of the constraint $\b{Ax} - \b{b} = \b{0}$ in problem (\ref{equation_optimization_problem_equality_constraint}).
Eqs. (\ref{equation_Newton_method_derivative_x_zero}) and (\ref{equation_Newton_method_derivative_nu_zero}) can be written as a system of equations:
\begin{align}
\begin{bmatrix}
\nabla^2 f(\b{x})^\top & \b{A}^\top \\
\b{A} & \b{0}
\end{bmatrix}
\begin{bmatrix}
\b{u} \\
\b{\nu}
\end{bmatrix} 
=
\begin{bmatrix}
-\nabla f(\b{x}) \\
\b{0}
\end{bmatrix}.
\end{align}
Solving this system of equations gives the desired step $\b{u}$ for updating the solution at the iteration.

\subsection{Inequality Constrained Optimization}

The optimization problem may have equality constraint:
\begin{equation}
\begin{aligned}
& \underset{\b{x}}{\text{minimize}}
& & f(\b{x}) \\
& \text{subject to}
& & f_i(\b{x}) \leq 0, \quad i \in \{1, \dots, m\}, \\
& & & \b{A} \b{x} = \b{b}.
\end{aligned}
\end{equation}

Barrier methods, also known as interior-point methods, convert inequality constrained problems to equality constrained or unconstrained problems. Ideally, we can do this conversion using the indicator function $\mathbb{I}(.)$ which is zero if its input condition is satisfied and is infinity otherwise (n.b. the indicator function in optimization literature is not like the indicator in data science which is one if its input condition is satisfied and is zero otherwise). The problem is converted to:
\begin{equation}
\begin{aligned}
& \underset{\b{x}}{\text{minimize}}
& & f(\b{x}) + \sum_{i=1}^m \mathbb{I}(f_i(\b{x}) \leq 0) \\
& \text{subject to}
& & \b{A} \b{x} = \b{b}.
\end{aligned}
\end{equation}
The indicator function is not differentiable because it is not smooth. Hence, we can approximate it with differentiable functions such as logarithm. Logarithmic barrier approximates the indicator function by:
\begin{align}
\mathbb{I}(f_i(\b{x}) \leq 0) \approx -\frac{1}{t} \log(-f_i(\b{x})),
\end{align}
where the approximation becomes more accurate by $t \rightarrow \infty$.
It changes the problem to:
\begin{equation}
\begin{aligned}
& \underset{\b{x}}{\text{minimize}}
& & f(\b{x}) - \frac{1}{t} \sum_{i=1}^m \log(-f_i(\b{x})) \\
& \text{subject to}
& & \b{A} \b{x} = \b{b}.
\end{aligned}
\end{equation}
This optimization problem is an equality constrained optimization problem which we already explained how to solve. 

\subsection{Semidefinite Programming}

An optimization problem in the following form is a Semidefinite Programming (SDP) \cite{vandenberghe1996semidefinite}:
\begin{equation}
\begin{aligned}
& \underset{\b{X}}{\text{minimize}}
& & \textbf{tr}(\b{C}^\top \b{X}) \\
& \text{subject to}
& & \textbf{tr}(\b{A}_i^\top \b{X}) = \b{b}_i, \quad i \in \{1, \dots, m_1\}, \\
& & & \textbf{tr}(\b{D}_i^\top \b{X}) \leq \b{e}_i, \quad i \in \{1, \dots, m_2\}, \\
& & & \b{X} \succeq \b{0},
\end{aligned}
\end{equation}
where $\b{C}$, $\b{A}_i$'s, and $\b{D}_i$'s are constant matrices and $\b{b}_i$'s and $\b{e}_i$'s are constant vectors. Note that the trace terms may be written in summation forms. 
The interior-point method, or the barrier method, introduced before, can be used for solving SDP \cite{nesterov1994interior,boyd2004convex}. Optimization toolboxes such as CVX \cite{grant2008cvx} often use interior-point method for solving optimization problems such as SDP. Note that this method is iterative and SDP solving usually is time consuming especially for large matrices. 
Also note that SDP is a convex optimization problem so it has only one local optimum which is the global optimum \cite{boyd2004convex}.

\section{Kernel Learning for Transduction}\label{section_kernel_learning_transduction}

Kernel learning by semidefinite programming was initially proposed in \cite{lanckriet2004learning} for the goal of transduction.
Transduction is a task in which the labeling of a not-completely-labeled dataset gets complete. In other words, using the labeled part of data, the embedding for the unlabeled part of data is also calculated. 
It seems that the paper \cite{lanckriet2004learning} has inspired the authors of MVU or SDE \cite{weinberger2004learning} to use semidefinite programming in kernel learning for the task of dimensionality reduction. 
As kernel learning for transduction is not completely related to dimensionality reduction, we briefly introduce it and do not enter its details. The reader can refer to \cite{lanckriet2004learning} or its summary in \cite{karimi2017summary} for more information about it. 

Consider a dataset which is partially labeled. The labeled and unlabeled sets of data can be the training and test sets, respectively. Let the training and test sets be denoted by $\{(\b{x}_i, y_i)\}_{i=1}^{n_\text{tr}}$ and $\{\b{x}_i\}_{i=n_\text{tr}+1}^{n_\text{te}}$, respectively, where $n_\text{tr}$ is the number of labeled training points and $n_\text{te}$ is the number of unlabeled test points and $n := n_\text{tr} + n_\text{te}$. 
The kernel matrix has the following sub-matrices:
\begin{align}
\mathbb{R}^{n \times n} \ni \b{K} = 
\begin{bmatrix}
\b{K}_{\text{tr}, \text{tr}} & \b{K}_{\text{tr}, \text{te}} \\
\b{K}_{\text{te}, \text{tr}} & \b{K}_{\text{te}, \text{te}},
\end{bmatrix},
\end{align}
where $\b{K}_{\text{tr}, \text{tr}} \in \mathbb{R}^{n_\text{tr} \times n_\text{tr}}$, $\b{K}_{\text{tr}, \text{te}} \in \mathbb{R}^{n_\text{tr} \times n_\text{te}}$, $\b{K}_{\text{te}, \text{tr}} \in \mathbb{R}^{n_\text{te} \times n_\text{tr}}$, and $\b{K}_{\text{te}, \text{te}} \in \mathbb{R}^{n_\text{te} \times n_\text{te}}$.
We will see later that MVU \cite{weinberger2004learning} learns the best kernel matrix for manifold unfolding. However, in contrast to kernel learning in MVU which learns the kernel matrix by optimization, the kernel learning proposed in \cite{lanckriet2004learning} finds the best kernel among a set of available kernels. Let $\mathcal{K} := \{\b{K}_1, \dots, \b{K}_m\}$ be the set of kernels whose elements can be different kernels such as linear kernel, RBF kernel, Laplacian kernel, etc \cite{ghojogh2021reproducing}. Kernel learning in this section learns the best kernel as one of the kernels in the set. 

We can learn the kernel among the sets of kernels by the following optimization problem {\citep[Theorem 16]{lanckriet2004learning}}:
\begin{equation}
\begin{aligned}
& \underset{\b{K}, \b{\nu}, \b{\delta}, \lambda, t}{\text{minimize}}
& & t \\
& \text{subject to}
& & \textbf{tr}(\b{K}) = c_1, \\
& & & \b{K} \in \mathcal{K}, \\
& & & \b{\nu} \geq \b{0}, \\
& & & \b{\delta} \geq \b{0}, \\
& & & 
\begin{bmatrix}
G(\b{K}_\text{tr}) + \tau \b{I} & \b{1} + \b{\nu} - \b{\delta} + \lambda \b{y}\\
(\b{1} + \b{\nu} - \b{\delta} + \lambda \b{y})^\top & t - 2\, c_2\, \b{\delta}^\top \b{1} 
\end{bmatrix}
\succeq \b{0},
\end{aligned}
\end{equation}
where $c_1$, $c_2$, and $\tau$ are constants, and the $(i,j)$-th element of $G(\b{K}_\text{tr})$ is defined as $G_{ij}(\b{K}_\text{tr}) := y_i y_j \b{K}_{ij}$. Note that $\b{K}_{ij}$ is the $(i,j)$-th element of the kernel matrix and we consider a binary classification here so the labels are $y_i \in \{-1, 1\}, \forall i$. 
The derivation of this problem is available in {\citep[Appendix B]{karimi2017summary}}.
This optimization problem is in one of the forms of semidefinite programming problem \cite{vandenberghe1996semidefinite}. See Section \ref{section_semidefinite_programming_background} for more information on semidefinite programming and how it is solved. As it is a SDP, it is a convex problem and has only one local optimum \cite{boyd2004convex}.

After learning the kernel $\b{K} \in \mathcal{K}$, it can be used for predicting the labels of unlabeled part of data, i.e., the test dataset. 
The paper \cite{lanckriet2004learning} uses kernel Support Vector Machine (SVM) \cite{vapnik1995nature} for predicting labels using the learned kernel.
Using kernelization techniques \cite{ghojogh2021reproducing}, the predictor of labels in kernel SVM becomes:
\begin{align}\label{equation_kernel_SVM_predictor}
f(\b{x}) = \sum_{i=1}^n \alpha_i\, k(\b{x}_i, \b{x}) + b,
\end{align}
where $k(.,.)$ is the kernel function \cite{ghojogh2021reproducing} which determines the elements of the kernel matrix $\b{K}_{ij} = k(\b{x}_i, \b{x}_j)$. Also, $\b{\alpha} = [\alpha_1, \dots, \alpha_n]^\top$ and $b$ is the bias. By using the learned kernel $\b{K}$ in Eq. (\ref{equation_kernel_SVM_predictor}), one can predict the labels of unlabeled test data. 
As was mentioned before, this kernel learning may have inspired the authors of MVU or SDE \cite{weinberger2004learning} to propose kernel learning for dimensionality reduction.

\section{Maximum Variance Unfolding (or Semidefinite Embedding)}\label{section_MVU}

Kernel can be learned using SDP for the sake of dimensionality reduction and manifold unfolding. In the following, we introduce kernel learning using SDP for this goal.

\subsection{Intuitions and Comparison with Kernel PCA}

As we saw in Section \ref{section_unified_framework}, most of the spectral dimensionality reduction methods can be unified as kernel PCA with different kernels. Therefore, let us learn the best kernel in dimensionality reduction for every specific dataset.

Semidefinite Embedding (SDE) \cite{weinberger2004learning,weinberger2004unsupervised,weinberger2005nonlinear,weinberger2006unsupervised}, which was renamed later to Maximum Variance Unfolding (MVU) \cite{weinberger2006introduction}, aims to find the best kernel which unfolds the manifold of data to its maximum variance. 
It learns the best kernel for manifold unfolding using semidefinite programming. 
An example manifold unfolding is shown in Fig. \ref{figure_swill_roll_unfold} where MVU gradually unfolds the nonlinear manifold to its extreme variance by iterations of semidefinite programming. 

By MVU, the embedding of data has its maximum variance in the embedding space. In this sense, the goal is similar to the goal of PCA \cite{ghojogh2019unsupervised} but there are major differences between MVU and PCA and kernel PCA. Some differences and similarities are:
\begin{itemize}
\item MVU performs embedding in the feature space or the so-called Reproducing Kernel Hilbert Space (RKHS); so does kernel PCA. However, PCA performs in the input space. 
\item MVU learns the best kernel while kernel PCA uses a ready kernel such as the Radial Basis Function (RBF) kernel. 
\item MVU is a nonlinear method. Kernel PCA transforms data to the feature space and then applies linear PCA to it. See \cite{ghojogh2021reproducing} for more explanation on the difference of these two approaches. 
\item MVU is an iterative algorithm because it solves semidefinite programming iteratively (see Section \ref{section_semidefinite_programming_background}). However, kernel PCA and PCA are almost closed form. We use the word ``almost" because solving the eigenvalue decomposition (EVD) or singular value decomposition (SVD) requires some iterations by algorithms such as the power method or Jordan's method \cite{stewart1993early} but we can see the solution of EVD or SVD as a black box. The iterative solution of semidefinite programming in MVU is much more complicated and time consuming although the task done by MVU is more promising in manifold unfolding. 
\end{itemize}

Assume we have a $d$-dimensional dataset $\mathcal{X} := \{\b{x}_i \in \mathbb{R}^d \}_{i=1}^n$. We aim to find a $p$-dimensional embedding of this dataset, denoted by $\mathcal{Y} := \{\b{y}_i \in \mathbb{R}^p \}_{i=1}^n$ where $p \leq d$ and usually $p \ll d$. 
Let $\b{X} := [\b{x}_1, \dots, \b{x}_n] \in \mathbb{R}^{d \times n}$ and $\b{Y} := [\b{y}_1, \dots, \b{y}_n] \in \mathbb{R}^{p \times n}$.
The embedding $\b{Y}$ is supposed to be the maximum variance unfolding of the manifold of data. 
As mentioned before, SDE \cite{weinberger2004learning,weinberger2004unsupervised,weinberger2005nonlinear,weinberger2006unsupervised} or MVU \cite{weinberger2006introduction} performs this task. 
MVU embeds data in the feature space or RKHS; in other words, the embedding space is the feature space (RKHS). Hence:
\begin{align}\label{equation_MVU_embedding_in_RKHS}
\b{y}_i = \b{\phi}(\b{x}_i), \quad \forall i \in \{1, \dots, n\}. 
\end{align}

\subsection{Local Isometry}

\begin{definition}[Isometric Manifolds \cite{weinberger2004unsupervised}]
Two Riemannian manifolds are isometric if there is a diffeomorphism such that the metric on one of them pulls back to the metric on the other one. In other words, isometry is is a smooth invertible mapping which locally looks like an affine transformation, i.e., a rotation and a translation. Hence, isometry preserves the local distances on the manifold.
\end{definition}
We use the notion of isometry between the data $\mathcal{X}$ and their embedding $\mathcal{Y}$. In other words, the local structure of data should be preserved in the embedding space \cite{saul2003think}. The datasets $\mathcal{X}$ and $\mathcal{Y}$ are locally isometric if they have similar rotation and translation relations between neighbor points. Let $\b{x}_j$ and $\b{x}_l$ be neighbors of $\b{x}_i$ so that they form a triangle. This triangle should also exist in the low dimensional embedding space with some rotation and translation. Hence, for isometry, we should have equal relative angles of points:
\begin{align}\label{equation_isometry_inner_product}
(\b{y}_i - \b{y}_j)^\top (\b{y}_i - \b{y}_l) = (\b{x}_i - \b{x}_j)^\top (\b{x}_i - \b{x}_l),
\end{align}
because inner product is proportional to the cosine of angle. 
A special case of Eq. (\ref{equation_isometry_inner_product}) is $l = j$:
\begin{align}\label{equation_isometry_inner_product_2}
\|\b{y}_i - \b{y}_j\|_2^2 = \|\b{x}_i - \b{x}_j\|_2^2.
\end{align}
We denote the Gram (kernel) matrices of points in the input and embedding spaces by $\mathbb{R}^{n \times n} \ni \b{G} := \b{X}^\top \b{X}$ and $\mathbb{R}^{n \times n} \ni \b{K} := \b{Y}^\top \b{Y}$, respectively. Let $\b{G}_{ij}$ and $\b{K}_{ij}$ denote the $(i,j)$-th element of $\b{G}$ and $\b{K}$, respectively. 
We have:
\begin{align*}
&\|\b{x}_i - \b{x}_j\|_2^2 = (\b{x}_i - \b{x}_j)^\top (\b{x}_i - \b{x}_j) \\
&= \b{x}_i^\top \b{x}_i + \b{x}_j^\top \b{x}_j - 2\, \b{x}_i^\top \b{x}_j = \b{G}_{ii} + \b{G}_{jj} - 2\, \b{G}_{ij}.
\end{align*}
Likewise, in the embedding space we have:
\begin{align}
&\|\b{y}_i - \b{y}_j\|_2^2 = (\b{y}_i - \b{y}_j)^\top (\b{y}_i - \b{y}_j) \nonumber \\
&= \b{y}_i^\top \b{y}_i + \b{y}_j^\top \b{y}_j - 2\, \b{y}_i^\top \b{y}_j \nonumber\\
&\overset{(\ref{equation_MVU_embedding_in_RKHS})}{=} \b{\phi}(\b{x}_i)^\top \b{\phi}(\b{x}_i) + \b{\phi}(\b{x}_j)^\top \b{\phi}(\b{x}_j) - 2\, \b{\phi}(\b{x}_i)^\top \b{\phi}(\b{x}_j) \nonumber\\
&\overset{(a)}{=} \b{K}_{ii} + \b{K}_{jj} - 2\, \b{K}_{ij}, \label{equation_distance_in_RKHS_MVU}
\end{align}
where $(a)$ is because of kernel trick \cite{ghojogh2021reproducing}. 
Note that Eq. (\ref{equation_distance_in_RKHS_MVU}) is in fact the distance of points in RKHS equipped with kernel $k$ \cite{scholkopf2001kernel,ghojogh2021reproducing}:
\begin{align*}
\|\b{y}_i - \b{y}_j\|_2^2 \overset{(\ref{equation_MVU_embedding_in_RKHS})}{=} \|\b{\phi}(\b{x}_i) - \b{\phi}(\b{x}_j)\|_k^2 =\! \b{K}_{ii} + \b{K}_{jj} -\! 2 \b{K}_{ij}.
\end{align*}
Therefore, Eq. (\ref{equation_isometry_inner_product_2}) is simplified to:
\begin{align}\label{equation_isometry_constraint}
\b{K}_{ii} + \b{K}_{jj} - 2\, \b{K}_{ij} = \b{G}_{ii} + \b{G}_{jj} - 2\, \b{G}_{ij}.
\end{align}

Some version of MVU uses $k$-Nearest Neighbors ($k$NN) in the local isometry \cite{weinberger2004learning}.
This version forms a $k$NN graph between points of training data points. Hence, we know the $k$ neighbors of every point in the dataset. Let $\tau_{ij}$ be one if $\b{x}_j$ is a neighbor of $\b{x}_i$ and zero otherwise:
\begin{align}\label{equation_MVU_tau_KNN}
\tau_{ij} := 
\left\{
    \begin{array}{ll}
        1 & \b{x}_j \in k\text{NN}(\b{x}_i) \\
        0 & \mbox{Otherwise.}
    \end{array}
\right.
\end{align}
Considering the neighboring points only to have local isometry modifies Eq. (\ref{equation_isometry_constraint}) slightly:
\begin{align}\label{equation_isometry_constraint_KNN}
\tau_{ij} (\b{K}_{ii} + \b{K}_{jj} - 2\, \b{K}_{ij}) = \tau_{ij} (\b{G}_{ii} + \b{G}_{jj} - 2\, \b{G}_{ij}).
\end{align}

\subsection{Centering}

We also want the embeddings of points to have zero mean:
\begin{align}\label{equation_mean_embedding_zero}
\sum_{i=1}^n \b{y}_i = \b{0}.
\end{align}
This removes the translational degree of freedom. 
According to Eq. (\ref{equation_MVU_embedding_in_RKHS}), centered embedding data means centered pulled data in the feature space. According to \cite{ghojogh2021reproducing}, this is equivalent to double centering the kernel matrix, resulting in:
\begin{align}\label{equation_centered_kernel}
\frac{1}{n} \sum_{i=1}^n \sum_{j=1}^n \b{K}_{ij} = \b{0} \implies \sum_{i=1}^n \sum_{j=1}^n \b{K}_{ij} = \b{0}.
\end{align}
Another justification for this is as follows. From Eq. (\ref{equation_mean_embedding_zero}), we have:
\begin{align*}
\Big|\sum_{i=1}^n \b{y}_i\Big|^2 &= \sum_{i=1}^n \sum_{j=1}^n \b{y}_i^\top \b{y}_j \overset{(\ref{equation_MVU_embedding_in_RKHS})}{=} \sum_{i=1}^n \sum_{j=1}^n \b{\phi}(\b{x}_i)^\top \b{\phi}(\b{x}_j) \nonumber \\
&\overset{(a)}{=} \sum_{i=1}^n \sum_{j=1}^n \b{K}_{ij} \overset{(\ref{equation_mean_embedding_zero})}{=} \b{0},
\end{align*}
where $(a)$ is because of the kernel trick \cite{ghojogh2021reproducing}.

\subsection{Positive Semidefiniteness}

As we saw so far, we are expressing embedding using kernels because we assumed the embedding space is the feature space. Therefore, optimization can be performed over the kernel matrix rather than the embedding points. However, a valid Mercer kernel should be symmetric and positive semidefinite \cite{ghojogh2021reproducing}. Hence, the kernel matrix should be constrained to belong to the cone of semidefinite matrices:
\begin{align}\label{equation_positive_semidefinitness_kernel_constraint}
\b{K} \in \mathbb{S}_+^n, \quad \text{or} \quad \b{K} \succeq \b{0}.
\end{align}


\subsection{Manifold Unfolding}

The goal of MVU is unfolding the manifold of data with its maximum variance, as shown in Fig. \ref{figure_swill_roll_unfold}. According to the definition of variance, MVU maximizes the following quantitative:
\begin{align}\label{equation_variance_of_embedding}
\mathcal{T}(\b{Y}) := \frac{1}{2n} \sum_{i=1}^n \sum_{j=1}^n \|\b{y}_i - \b{y}_j\|_2^2.
\end{align}

\begin{lemma}[Boundedness of Variance of Embedding \cite{weinberger2006unsupervised}]\label{lemma_boundedness_of_embedding_variance}
The value of $\mathcal{T}(\b{Y})$ is bounded. 
\end{lemma}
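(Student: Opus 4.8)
The plan is to exploit the local isometry constraints together with the connectivity of the neighborhood graph. First I would observe that the $k$NN isometry constraint in Eq. (\ref{equation_isometry_constraint_KNN}) pins the embedded distance between any neighbor pair to equal the corresponding input-space distance: whenever $\tau_{ij} = 1$, we have $\|\b{y}_i - \b{y}_j\|_2^2 = \|\b{x}_i - \b{x}_j\|_2^2$. Hence every edge of the $k$NN graph has a fixed, finite length in the embedding space. I would set $D_{\max} := \max_{\tau_{ij}=1} \|\b{x}_i - \b{x}_j\|_2$ to be the largest such edge length, which is finite because there are finitely many points.

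Next, assuming the neighborhood graph is connected (the standard hypothesis in MVU), I would bound the embedded distance between an arbitrary pair $\b{y}_a, \b{y}_b$ by walking along a simple path $a = i_0, i_1, \dots, i_m = b$ of consecutive neighbors. Applying the triangle inequality for the Euclidean norm and then the edge-wise isometry,
\begin{align*}
\|\b{y}_a - \b{y}_b\|_2 \leq \sum_{t=0}^{m-1} \|\b{y}_{i_t} - \b{y}_{i_{t+1}}\|_2 = \sum_{t=0}^{m-1} \|\b{x}_{i_t} - \b{x}_{i_{t+1}}\|_2 \leq (n-1)\, D_{\max},
\end{align*}
where I used that a simple path visits at most $n$ vertices, hence at most $n-1$ edges. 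Squaring and summing over all pairs then yields
\begin{align*}
\mathcal{T}(\b{Y}) = \frac{1}{2n} \sum_{i=1}^n \sum_{j=1}^n \|\b{y}_i - \b{y}_j\|_2^2 \leq \frac{1}{2n}\, n^2\, (n-1)^2\, D_{\max}^2 = \frac{n (n-1)^2}{2}\, D_{\max}^2 < \infty,
\end{align*}
which establishes the claimed boundedness.

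The main subtlety is that the triangle inequality applies to the norm rather than its square, so I must bound $\|\b{y}_a - \b{y}_b\|_2$ first and only then square; working directly with $\|\cdot\|_2^2$ would be invalid. The essential structural hypothesis, which I would flag explicitly, is that the $k$NN graph is connected: without connectivity, disconnected components could drift arbitrarily far apart while still satisfying every isometry constraint (the centering constraint of Eq. (\ref{equation_mean_embedding_zero}) fixes only the overall mean, not relative placement), and the variance would then be unbounded. Thus the finiteness of the bound hinges precisely on the propagation of edge-length constraints through a connected graph.
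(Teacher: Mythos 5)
Your proof is correct and follows essentially the same route as the paper's: bound the embedded distance between any pair of points by a path through the connected $k$NN graph whose edge lengths are fixed by the local isometry constraint, take the maximum edge length as the per-edge bound, and sum over all pairs. Your write-up is in fact somewhat more careful than the paper's — you make explicit the triangle-inequality step on the norm (rather than its square) and the role of connectivity, both of which the paper leaves implicit — but the underlying argument and the resulting $O(n^3)$ bound are the same.
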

\begin{proof}
Suppose $\eta_{ij} = 1$ if $\b{x}_j$ is one of the $k$-nearest neighbors ($k$NN) of $\b{x}_i$; otherwise, $\eta_{ij} = 0$.
Let:
\begin{align}\label{equation_tau_small}
\tau := \max_{i,j} \Big( \eta_{ij}\, \| \b{x}_i - \b{x}_j \|_2 \Big) \overset{(a)}{<} \infty,
\end{align}
where $(a)$ is because $\| \b{x}_i - \b{x}_j \|_2 < \infty$.
Assuming that the $k$NN graph is connected, the longest path is at most $n \tau$, i.e., $\|\b{y}_i - \b{y}_j\|_2 \leq n \tau$. 
Therefore, an upper bound on Eq. (\ref{equation_variance_of_embedding}) is:
\begin{align*}
\mathcal{T}(\b{Y})\! \leq\! \frac{1}{2n} \sum_{i=1}^n \sum_{j=1}^n (n \tau)^2 = \frac{1}{2n} n^2 (n \tau)^2 = \frac{n^3 \tau^2}{2} \overset{(\ref{equation_tau_small})}{<} \infty.
\end{align*}
Q.E.D.
\end{proof}

According to Lemma \ref{lemma_boundedness_of_embedding_variance}, the variance of embedding is bounded so we can maximize it. 

The Eq. (\ref{equation_variance_of_embedding}) can be simplified as:
\begin{align}
\mathcal{T}(\b{Y}) &:= \frac{1}{2n} \sum_{i=1}^n \sum_{j=1}^n \|\b{y}_i - \b{y}_j\|_2^2 \nonumber\\
&\overset{(\ref{equation_distance_in_RKHS_MVU})}{=} \frac{1}{2n} \sum_{i=1}^n \sum_{j=1}^n (\b{K}_{ii} + \b{K}_{jj} - 2\b{K}_{ij}) \nonumber\\
&= \frac{1}{2n} \Big[ \sum_{i=1}^n \b{K}_{ii} + \sum_{j=1}^n \b{K}_{jj} - 2 \sum_{i=1}^n \sum_{j=1}^n \b{K}_{ij} \Big] \nonumber\\
&\overset{(\ref{equation_centered_kernel})}{=} \frac{1}{2n} \Big[ \sum_{i=1}^n \b{K}_{ii} + \sum_{j=1}^n \b{K}_{jj} \Big] = \frac{1}{2n} \Big[ 2 \sum_{i=1}^n \b{K}_{ii} \Big] \nonumber\\
&= \frac{1}{n} \sum_{i=1}^n \b{K}_{ii} \propto \sum_{i=1}^n \b{K}_{ii} \overset{(a)}{=} \textbf{tr}(\b{K}), \label{equation_trace_of_kernel}
\end{align}
where $\textbf{tr}(.)$ denotes the trace of matrix and $(a)$ is because trace of a matrix is equivalent to the summation of its diagonal elements. 
The Eq. (\ref{equation_trace_of_kernel}) makes sense because kernel is a measure of similarity of points \cite{ghojogh2021reproducing} so it is related to the distance of points and the variance of unfolding. 

In summary, MVU maximizes the variance of embedding, i.e. Eq. (\ref{equation_trace_of_kernel}), with constraints of Eqs. (\ref{equation_isometry_constraint}), (\ref{equation_centered_kernel}), and (\ref{equation_positive_semidefinitness_kernel_constraint}). Hence, it solves the following optimization problem:
\begin{equation}\label{equation_MVU_optimization}
\begin{aligned}
& \underset{\b{K}}{\text{maximize}}
& & \textbf{tr}(\b{K}) \\
& \text{subject to}
& & \b{K}_{ii} + \b{K}_{jj} - 2\, \b{K}_{ij} = \b{G}_{ii} + \b{G}_{jj} - 2\, \b{G}_{ij}, \\
& & & ~~~~~~~~\qquad\qquad\qquad\qquad \; \forall i,j \in \{1, \ldots, n\}, \\
& & & \sum_{i=1}^n \sum_{j=1}^n \b{K}_{ij} = 0, \\
& & & \b{K} \succeq \b{0},
\end{aligned}
\end{equation}
which is a semidefinite programming problem \cite{vandenberghe1996semidefinite}. See Section \ref{section_semidefinite_programming_background} for more information on semidefinite programming and how it is solved. It is valuable that Eq. (\ref{equation_MVU_optimization}) is a convex optimization problem so it has only one local optimum which is the global optimum \cite{boyd2004convex}.

The Eq. (\ref{equation_MVU_optimization}) is using Eq. (\ref{equation_isometry_constraint}) for local isometry. Some versions of MVU use Eq. (\ref{equation_isometry_constraint_KNN}) as the local isometry constraint:
\begin{equation}\label{equation_MVU_optimization_KNN}
\begin{aligned}
& \underset{\b{K}}{\text{maximize}}
& & \textbf{tr}(\b{K}) \\
& \text{subject to}
& & \tau_{ij} (\b{K}_{ii} + \b{K}_{jj} - 2\, \b{K}_{ij}) \\
& & & \quad\quad\quad = \tau_{ij} (\b{G}_{ii} + \b{G}_{jj} - 2\, \b{G}_{ij}), \\
& & & ~~~~~~~~\qquad\qquad\quad \; \forall i,j \in \{1, \ldots, n\}, \\
& & & \sum_{i=1}^n \sum_{j=1}^n \b{K}_{ij} = 0, \\
& & & \b{K} \succeq \b{0},
\end{aligned}
\end{equation}
Note that if we see the entire dataset as one class and set $k=n$, Eq. (\ref{equation_MVU_optimization_KNN}) becomes equivalent to Eq. (\ref{equation_MVU_optimization}).
Using $k$NN in Eq. (\ref{equation_MVU_optimization_KNN}) makes optimization slightly more efficient and faster because it does not compute kernel between all points; although, computation of $k$NN graph can be time-consuming. 

\subsection{Spectral Embedding}

After kernel is found by solving optimization (\ref{equation_MVU_optimization}), we calculate the eigenvalues and eigenvectors of the kernel \cite{ghojogh2019eigenvalue} and then the embedding of every point is obtained by Eq. (\ref{equation_embedding_eigenvector_of_kernel}) in the following lemma. The obtained embedding dataset has the maximum variance in the embedding space. 
Note that for finding the intrinsic dimensionality of manifold, denoted by $p \leq n$, we sort the eigenvalues from largest to smallest and a huge gap between two successive eigenvalues shows a good cut-off for the number of required dimensions. A scree plot can be used for example to visualize this cut-off. For better understanding of intrinsic dimensionality, see the examples in Fig. \ref{figure_intrinsic_dimensionality}.

\begin{lemma}[Embedding from Eigenvectors of Kernel Matrix \cite{ghojogh2021reproducing}]
Let $\b{v}_k = [v_{k1}, \dots, v_{kn}]^\top$ and $\delta_k$ be the $k$-th eigenvector and eigenvalue of kernel matrix, respectively. We can compute the embedding of point $\b{x}$, denoted by $\b{y}(\b{x}) = [y_1(\b{x}), \dots, y_p(\b{x})]^\top$ (where $p \leq n$) using the eigenvector of kernel as:
\begin{align}\label{equation_embedding_eigenvector_of_kernel}
y_k(\b{x}) = \sqrt{\delta_k}\, v_{ki}. 
\end{align}
\end{lemma}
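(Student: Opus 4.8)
The plan is to exploit the fact that, by the construction in the Local Isometry section, the learned kernel is exactly the Gram matrix of the embedding, $\b{K} = \b{Y}^\top \b{Y}$, so recovering $\b{Y}$ from $\b{K}$ is just a matter of factorizing $\b{K}$. First I would write the eigendecomposition of the symmetric kernel matrix,
\begin{align}
\b{K} = \b{V} \b{\Delta} \b{V}^\top = \sum_{k=1}^n \delta_k\, \b{v}_k \b{v}_k^\top,
\end{align}
where $\b{V} = [\b{v}_1, \dots, \b{v}_n]$ is orthonormal and $\b{\Delta} = \text{diag}(\delta_1, \dots, \delta_n)$. Because the positive-semidefiniteness constraint (\ref{equation_positive_semidefinitness_kernel_constraint}) forces every $\delta_k \geq 0$, the matrix $\b{\Delta}^{1/2}$ is real, and I can split
\begin{align}
\b{K} = (\b{\Delta}^{1/2} \b{V}^\top)^\top (\b{\Delta}^{1/2} \b{V}^\top).
\end{align}

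Comparing this with $\b{K} = \b{Y}^\top \b{Y}$ lets me identify a valid embedding matrix $\b{Y} = \b{\Delta}^{1/2} \b{V}^\top$. The $(k,i)$-th entry of this matrix is $\sqrt{\delta_k}\, v_{ki}$, so reading off the $i$-th column gives the embedding of the $i$-th training point, whose $k$-th coordinate is exactly $y_k(\b{x}_i) = \sqrt{\delta_k}\, v_{ki}$, as claimed. To obtain the $p$-dimensional reduction one truncates to the top $p$ eigenpairs (those with largest $\delta_k$), which is justified because each retained coordinate contributes $\delta_k$ to the total variance $\textbf{tr}(\b{K})$ computed in (\ref{equation_trace_of_kernel}); discarding the smallest eigenvalues discards the least variance, consistent with the scree-plot cut-off discussed above.

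As a cross-check I would verify consistency with the eigenfunction route already established. Substituting $f_k(\b{x}_i) = \sqrt{n}\, v_{ki}$ from (\ref{equation_relation_eigenfunction_eigenvector_x_i}) into the eigenfunction embedding formula (\ref{equation_embedding_eigenfunction}) gives $y_k(\b{x}_i) = \sqrt{\delta_k}\, f_k(\b{x}_i)/\sqrt{n} = \sqrt{\delta_k}\, v_{ki}$, recovering the same expression; this confirms that the present lemma is the in-sample specialization of Theorem \ref{theorem_embedding_from_eigenfunctions}.

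The one point that needs care, which I regard as the main (though mild) obstacle, is the indexing in the statement: the left-hand side is written for a generic point $\b{x}$, but the right-hand side carries the training index $i$ through $v_{ki}$. The formula as written is therefore valid for training points $\b{x} = \b{x}_i$ only, since $\b{v}_k$ has only $n$ entries, one per training point. For a genuine out-of-sample $\b{x}$ there is no corresponding entry of $\b{v}_k$, and one must instead invoke the out-of-sample kernel extension of Theorem \ref{theorem_embedding_from_eigenfunctions}, i.e. $y_k(\b{x}) = (1/\sqrt{\delta_k}) \sum_{i=1}^n v_{ki}\, \breve{k}(\b{x}_i, \b{x})$. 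I would state this restriction explicitly so the reader does not misread $y_k(\b{x})$ as an out-of-sample map.
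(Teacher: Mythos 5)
Your proposal is correct and follows essentially the same route as the paper's own justification (given immediately after the lemma): eigendecompose $\b{K} = \b{V}\b{\Delta}\b{V}^\top$, use positive semidefiniteness to split $\delta_k = \sqrt{\delta_k}\sqrt{\delta_k}$, and match against $\b{K}_{ij} = \b{y}_i^\top\b{y}_j$ to read off $y_k(\b{x}_i) = \sqrt{\delta_k}\,v_{ki}$. Your closing caveat that the formula as stated only applies to training points $\b{x} = \b{x}_i$ (with out-of-sample points requiring Theorem \ref{theorem_embedding_from_eigenfunctions}) is accurate and worth stating.
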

\begin{proof}
See \cite{ghojogh2021reproducing} for proof. 
\end{proof}

Another way for justifying Eq. (\ref{equation_embedding_eigenvector_of_kernel}) is as follows \cite{weinberger2006unsupervised}. From the eigenvalue decomposition of kernel matrix, we have \cite{ghojogh2019eigenvalue}:
\begin{align}\label{equation_kernel_eigenvalue_decomposition}
&\b{K} = \b{V} \b{\Delta} \b{V}^\top \nonumber\\
&\implies \b{K}_{ij} = \sum_{k=1}^n \delta_k\, v_{ki}\, v_{kj}, \quad \forall i,j \in \{1, \dots, n\},
\end{align}
where the columns of $\b{V}$ are the eigenvectors and the diagonal elements of $\b{\Delta}$ are eigenvalues.
Also kernel can be stated as \cite{ghojogh2021reproducing}:
\begin{align}\label{equation_kernel_y_y_transpose}
\b{K}_{ij} = \b{\phi}(\b{x}_i)^\top \b{\phi}(\b{x}_j) \overset{(\ref{equation_MVU_embedding_in_RKHS})}{=} \b{y}_i^\top \b{y}_j.
\end{align}
Considering both Eqs. (\ref{equation_kernel_eigenvalue_decomposition}) and (\ref{equation_kernel_y_y_transpose}) gives:
\begin{align*}
&\b{y}_i^\top \b{y}_j = \sum_{k=1}^n \delta_k\, v_{ki}\, v_{kj} \overset{(a)}{=} \sum_{k=1}^n \sqrt{\delta_k}\, v_{ki}\, \sqrt{\delta_k}\, v_{kj} \\
&\implies \b{y}_i(k) = \sqrt{\delta_k}\, v_{ki},
\end{align*}
where $\b{y}_i(k)$ is the $k$-th element of $\b{y}_i$ and $(a)$ is allowed because kernel matrix is positive semidefinite (see Eq. (\ref{equation_positive_semidefinitness_kernel_constraint})) so its eigenvalues, $\delta_k$'s, are nonnegative. This equation is equal to Eq. (\ref{equation_embedding_eigenvector_of_kernel}).

\section{Supervised Maximum Variance Unfolding}\label{section_supervised_MVU}

MVU is an unsupervised manifold learning method. 
There exist several variants for supervised MVU or supervised SDE which make use of class labels. In the following, we introduce these methods. 

\subsection{Supervised MVU Using $k$NN within Classes}

One of the methods for supervised MVU or supervised SDE is \cite{zhang2005supervised,liu2005supervised} which uses $k$-Nearest Neighbors ($k$NN) within classes. 
It modifies Eq. (\ref{equation_MVU_tau_KNN}) in a way that the $k$NN graph is formed between points of every class and not amongst the entire data points. Hence, we know the $k$ neighbors of every point in each class. Let $\tau_{ij}$ be one if $\b{x}_i$ and $\b{x}_i$ belong to the same class and $\b{x}_j$ is a neighbor of $\b{x}_i$; otherwise, it is zero. 
The optimization problem of MVU is the same as Eq. (\ref{equation_MVU_optimization_KNN}) where $\tau_{ij}$ has been computed differently. 

\subsection{Supervised MVU by Class-wise Unfolding}

The method named SMVU1 proposed in \cite{wei2016developments} is a supervised MVU method which unfolds manifold class-wise. 
Let $G_c$ denote the set of points belonging to the class $c$ and $\bar{\b{x}}_c$ be the mean of class $c$.
For the class $c$, we define the representative $\b{x}_c$ to be the closest point of class to the mean of class:
\begin{align}
\b{x}_c := \underset{\b{x}_i \in G_c}{\min} \|\b{x}_i - \bar{\b{x}}_c\|_2^2.
\end{align}
Note that, rather than the above definition, one can define $\b{x}_c$ to be the medoid of class which is the closest point to all points of the class; however, computation of medoid can be more time-consuming. 

Assume we have $C$ classes denoted by $\{c_1, \dots, c_C\}$. 
Recall that local isometry yielded $\b{K}_{ii} + \b{K}_{jj} - 2\, \b{K}_{ij} = \|\b{x}_i - \b{x}_j\|_2^2$.
For class-wise local isometry, we can have:
\begin{align}
\b{K}_{c_i c_i} + \b{K}_{c_j c_j} - 2\, \b{K}_{c_i c_j} = \alpha^2 \|\b{x}_{c_i} - \b{x}_{c_j}\|_2^2,
\end{align}
$\forall i,j \in \{1, \dots, C\}$, 
where $\alpha > 1$ is a hyperparameter \cite{wei2016developments} and $\b{K}_{c_i c_j} := \b{\phi}(\b{x}_{c_i})^\top \b{\phi}(\b{x}_{c_j})$ is the kernel between representatives of classes $i$ and $j$.
The paper \cite{wei2016developments} considers the following pairs of classes:
\begin{align*}
&\b{K}_{c_i c_i} + \b{K}_{c_{i+1} c_{i+1}} - 2\, \b{K}_{c_i c_{i+1}} = \alpha^2 \|\b{x}_{c_i} - \b{x}_{c_j}\|_2^2, \\
&\b{K}_{c_i c_i} + \b{K}_{c_C c_C} - 2\, \b{K}_{c_i c_C} = \alpha^2 \|\b{x}_{c_i} - \b{x}_C\|_2^2,
\end{align*}
$\forall i,j \in \{1, \dots, C\}$.
We define:
\begin{align}
& \Gamma_c = \frac{1}{2} \sum_{i=1}^{n_c} \sum_{j=1}^{n_c} (\b{K}_{ii} + \b{K}_{jj} - 2\b{K}_{ij}),\; \forall \b{x}_i, \b{x}_j \in G_c, \\
& \Gamma := \sum_{c=1}^C \frac{\Gamma_c}{n_c}.
\end{align}
where $n_c$ denotes the number of points in class $c$. We maximize this term to maximize the variance of unfolding for each class. The SDP optimization for kernel learning is \cite{wei2016developments}:
\begin{equation}\label{equation_supervised_MVU_optimization_2}
\begin{aligned}
& \underset{\b{K}}{\text{maximize}} \quad \sum_{c=1}^C \frac{\Gamma_c}{n_c} \\
& \text{subject to} \\
& \tau_{ij} (\b{K}_{ii} + \b{K}_{jj} - 2\, \b{K}_{ij}) = \tau_{ij} (\b{G}_{ii} + \b{G}_{jj} - 2\, \b{G}_{ij}), \\
& ~~~~~~~~\qquad\qquad\qquad\qquad\qquad \; \forall i,j \in \{1, \ldots, n\}, \\
& \b{K}_{c_i c_i} + \b{K}_{c_{i+1} c_{i+1}} - 2\, \b{K}_{c_i c_{i+1}} = \alpha^2 \|\b{x}_{c_i} - \b{x}_{c_j}\|_2^2, \\
& \b{K}_{c_i c_i} + \b{K}_{c_C c_C} - 2\, \b{K}_{c_i c_C} = \alpha^2 \|\b{x}_{c_i} - \b{x}_C\|_2^2, \\
& ~~~~~~~~\qquad\qquad\qquad\qquad\qquad \; \forall i,j \in \{1, \dots, C\}, \\
& \sum_{i=1}^n \sum_{j=1}^n \b{K}_{ij} = 0, \\
& \b{K} \succeq \b{0},
\end{aligned}
\end{equation}
where $\tau_{ij}$ is the same as defined before in Eq. (\ref{equation_MVU_tau_KNN}). 

\subsection{Supervised MVU by Fisher Criterion}

The method named SMVU2 proposed in \cite{wei2016developments} is a supervised MVU method which unfolds manifold by Fisher criterion; hence, we name this method Fisher-MVU. Fisher criterion maximizes the between-class scatter and minimizes the within-class scatter. The within-class scatter in the embedding space (or feature space equipped with kernel $k$) is:
\begin{align}
\sigma_W &:= \sum_{c=1}^C \frac{1}{n_c} \sum_{\b{x}_i \in G_c} \|\b{\phi}(\b{x}_i) - \b{\phi}(\b{x}_c)\|_k^2 \nonumber \\
&\overset{(a)}{=} \sum_{c=1}^C \frac{1}{n_c} \sum_{\b{x}_i \in G_c} (\b{K}_{ii} + \b{K}_{cc} - 2\b{K}_{ci}),
\end{align}
where $\b{K}_{ii} := \b{\phi}(\b{x}_i)^\top \b{\phi}(\b{x}_i)$, $\b{K}_{cc} := \b{\phi}(\b{x}_c)^\top \b{\phi}(\b{x}_c)$, and $\b{K}_{ci} := \b{\phi}(\b{x}_c)^\top \b{\phi}(\b{x}_i)$, and $(a)$ is because of distance in the feature space \cite{scholkopf2001kernel,ghojogh2021reproducing}. 
The between-class scatter in the embedding space (or feature space equipped with kernel $k$) is:
\begin{align}
\sigma_B &:= \sum_{c_i=1}^C \sum_{c_j=1}^C \|\b{\phi}(\b{x}_{c_i}) - \b{\phi}(\b{x}_{c_j})\|_k^2 \nonumber \\
&\overset{(a)}{=} \sum_{c_i=1}^C \sum_{c_j=1}^C (\b{K}_{c_i c_i} + \b{K}_{c_j c_j} - 2\b{K}_{c_i c_j}).
\end{align}
where $(a)$ is because of distance in the feature space \cite{scholkopf2001kernel,ghojogh2021reproducing}. 
Note that the between-class scatter has another form which uses the mean of classes; however, as the mean of a class is not necessarily one of the points, its embedding (in the feature space) is not available. Therefore, the scatter of all points from classes is used for computation of between-class scatter.
One of the forms for Fisher criterion is \cite{fukunaga1990introduction}:
\begin{align}
\Gamma = C \times (\sigma_B - \sigma_W),
\end{align}
which should be maximized. 
The SDP optimization for kernel learning is \cite{wei2016developments}:
\begin{equation}\label{equation_supervised_MVU_optimization_3}
\begin{aligned}
& \underset{\b{K}}{\text{maximize}}
& & C (\sigma_B - \sigma_W) \\
& \text{subject to}
& & \tau_{ij} (\b{K}_{ii} + \b{K}_{jj} - 2\, \b{K}_{ij}) \\
& & & \quad\quad\quad = \tau_{ij} (\b{G}_{ii} + \b{G}_{jj} - 2\, \b{G}_{ij}), \\
& & & ~~~~~~~~\qquad\qquad\quad \; \forall i,j \in \{1, \ldots, n\}, \\
& & & \sum_{i=1}^n \sum_{j=1}^n \b{K}_{ij} = 0, \\
& & & \b{K} \succeq \b{0}.
\end{aligned}
\end{equation}

\subsection{Supervised MVU by Colored MVU}

One of the supervised approches for MVU is colored MVU \cite{song2007colored} which uses some side information such as labels. In its formulation, it uses the Hilbert-Schmidt Independence Criterion (HSIC) \cite{gretton2005measuring} between embedded data points $\{\b{y}_i = \b{\phi}(\b{x}_i) \}_{i=1}^n$ and labels $\{l_i \}_{i=1}^n$:
\begin{align}
\text{HSIC} &:= \frac{1}{(n-1)^2}\, \textbf{tr}(\b{K}_l\b{H}\b{K}\b{H}) \nonumber \\
&\overset{(a)}{=} \frac{1}{(n-1)^2}\,  \textbf{tr}(\b{H}\b{K}\b{H}\b{K}_l), \label{equation_HSIC}
\end{align}
where $(a)$ is because of cyclic property of trace, $\b{K} \in \mathbb{R}^{n \times n}$ and $\b{K}_l \in \mathbb{R}^{n \times n}$ are kernel matrices over embedded points and labels, respectively, and $\mathbb{R}^{n \times n} \ni \b{H} := \b{I} - (1/n) \b{1}\b{1}^\top$ is the centering matrix. 
HSIC is a measure of dependence between two random variables \cite{ghojogh2021reproducing}. 
Colored MVU maximizes the HSIC between the embedded points and the labels; in other words, the dependence of embedding and labels is maximized to be supervised. Therefore, colored MVU maximizes Eq. (\ref{equation_HSIC}), i.e. $\textbf{tr}(\b{H}\b{K}\b{H}\b{K}_l)$ or $\textbf{tr}(\b{K}\b{H}\b{K}_l\b{H})$, rather than maximizing $\textbf{tr}(\b{K})$ which is done in MVU \cite{song2007colored}:
\begin{equation}\label{equation_colored_MVU_optimization}
\begin{aligned}
& \underset{\b{K}}{\text{maximize}}
& & \textbf{tr}(\b{H}\b{K}\b{H}\b{K}_l) \\
& \text{subject to}
& & \tau_{ij} (\b{K}_{ii} + \b{K}_{jj} - 2\, \b{K}_{ij}) \\
& & & \quad\quad\quad = \tau_{ij} (\b{G}_{ii} + \b{G}_{jj} - 2\, \b{G}_{ij}), \\
& & & ~~~~~~~~\qquad\qquad\quad \; \forall i,j \in \{1, \ldots, n\}, \\
& & & \sum_{i=1}^n \sum_{j=1}^n \b{K}_{ij} = 0, \\
& & & \b{K} \succeq \b{0}.
\end{aligned}
\end{equation}
Note that as kernel is a soft measure of similarity \cite{ghojogh2021reproducing}, the labels or side information $\{l_i \}_{i=1}^n$ can be soft labels (e.g., regression labels) or hard labels (e.g., classification labels). In case of hard labels, one of the best choices for the $\b{K}_l$ is delta kernel \cite{barshan2011supervised} where the $(i,j)$-th element of kernel is:
\begin{align}
\b{K}_l(i,j) = \delta_{\b{l}_i, \b{l}_j} := 
\left\{
\begin{array}{ll}
  1 & \text{if } \b{l}_i = \b{l}_j,\\
  0 & \text{if } \b{l}_i \neq \b{l}_j,
\end{array}
\right.
\end{align}
where $\delta_{\b{l}_i, \b{l}_j}$ is the Kronecker delta which is one if the $\b{x}_i$ and $\b{x}_j$ belong to the same class.

\section{Out-of-sample Extension of MVU}\label{section_MVU_outofsample}

There are several approaches for out-of-sample extension of MVU. In the following, we introduce some of these approaches which are approaches using eigenfunctions \cite{chin2008out} and kernel mapping \cite{gisbrecht2015parametric}. There exist some other methods for out-of-sample extension such as \cite{bunte2012general} which we do not cover here. 

\subsection{Out-of-sample Extension Using Eigenfunctions}

One of the methods for out-of-sample extension of MVU is using eigenfunctions \cite{chin2008out}. 
Recall Eq. (\ref{equation_relation_eigenfunction_eigenvector_x}) which relates the eigenvectors of kernel function and eigenfunctions of kernel operator. 
According to \cite{schwaighofer2005learning}, this equation can be slightly modified to:
\begin{align}\label{equation_MVU_outofsample_eigenfunction}
&f_k(\b{x}) = \sum_{i=1}^n b_{ki}\, r(\b{x}_i, \b{x}) = \sum_{i=1}^n b_{ki}\, r(\b{x}, \b{x}_i), 
\end{align}
where $r(.,.)$ is an auxiliary smoothing kernel such as the RBF kernel, and $b_{ki}$ is the $i$-th element of:
\begin{align}\label{equation_MVU_outofsample_b}
\mathbb{R}^n \ni \b{b}_k := (\b{R} + \eta \b{I})^{-1} \b{v}_k,
\end{align}
where $\b{v}_k$ is the $k$-th eigenvector of kernel matrix $\b{K} \in \mathbb{R}^{n \times n}$ over training data, and $\b{R} \in \mathbb{R}^{n \times n}$ is the smoothing kernel matrix on the $n$ training data points using kernel $r(.,.)$ and $\eta > 0$ is a regularization parameter for stable inverse of matrix. 

According to the Mercer's theorem \cite{ghojogh2021reproducing}, the kernel can be written as \cite{chin2008out}:
\begin{align}
&k(\b{x}, \b{x}_i) = \sum_{j=1}^n \lambda_j \psi_j(\b{x}) \psi_j(\b{x}_i) \nonumber \\
&~~~~~~~ \overset{(a)}{=}
\b{r}(\b{x})^\top (\b{R} + \eta \b{I})^{-1} \b{K} (\b{R} + \eta \b{I})^{-1} \b{r}(\b{x}_i) \label{equation_MVU_outofsample_k}
\end{align}
where $(a)$ is because of Eqs. (\ref{equation_MVU_outofsample_eigenfunction}) and (\ref{equation_MVU_outofsample_b}), $\b{r}(\b{x}) := [r(\b{x}_1, \b{x}), \dots, r(\b{x}_n, \b{x})]^\top \in \mathbb{R}^n$, $\{\psi_j\}_{j=1}^n$ and $\{\lambda_j\}_{j=1}^n$ are the eigenfunctions and eigenvalues of the kernel operator $k$, respectively. 
According to Eq. (\ref{equation_embedding_eigenfunction}), we have:
\begin{align}\label{equation_MVU_outofsample_y_embedding}
y_k(\b{x}) &= \frac{1}{\sqrt{\delta_k}} \sum_{i=1}^n v_{ki}\, k(\b{x}, \b{x}_i) \overset{(\ref{equation_MVU_outofsample_k})}{=} \b{p}_k\, \b{r}(\b{x}),
\end{align}
where $\delta_k$ is the $k$-th eigenvalue of kernel matrix $\b{K}$ and \cite{chin2008out}:
\begin{align}
\mathbb{R}^{1 \times n} \ni \b{p}_k := \delta_k^{-1/2} \b{v}_k^\top \b{R} (\b{R} + \eta \b{I})^{-1} \b{K} (\b{R} + \eta \b{I})^{-1}.
\end{align}
Therefore, for a point $\b{x}$ which can be an out-of-sample point, the $k$-th dimension of embedding is calculated using Eq. (\ref{equation_MVU_outofsample_y_embedding}) and considering the top $p \leq n$ dimensions of embedding gives $\mathbb{R}^p \ni \b{y}(\b{x}) = [y_1(\b{x}), \dots, y_p(\b{x})]^\top$. 

\subsection{Out-of-sample Extension Using Kernel Mapping}

There is a kernel mapping method \cite{gisbrecht2015parametric} to embed the out-of-sample data in MVU.
We introduce this method here.
We define a map which maps any data point as $\b{x} \mapsto \b{y}(\b{x})$, where:
\begin{align}\label{equation_kernel_tSNE_map}
\mathbb{R}^p \ni \b{y}(\b{x}) := \sum_{j=1}^n \b{\alpha}_j\, \frac{k(\b{x}, \b{x}_j)}{\sum_{\ell=1}^n k(\b{x}, \b{x}_{\ell})},
\end{align}
and $\b{\alpha}_j \in \mathbb{R}^p$, and $\b{x}_j$ and $\b{x}_{\ell}$ denote the $j$-th and $\ell$-th training data points, respectively.
The $k(\b{x}, \b{x}_j)$ is a kernel such as the Gaussian kernel:
\begin{align}
k(\b{x}, \b{x}_j) = \exp(\frac{-||\b{x} - \b{x}_j||_2^2}{2\, \sigma_j^2}),
\end{align}
where $\sigma_j$ is calculated as \cite{gisbrecht2015parametric}:
\begin{align}
\sigma_j := \gamma \times \min_{i}(||\b{x}_j - \b{x}_i||_2),
\end{align}
where $\gamma$ is a small positive number.

Assume we have already embedded the training data points using MVU; therefore, the set $\{\b{y}_i\}_{i=1}^n$ is available.
If we map the training data points, we want to minimize the following least-squares cost function in order to get $\b{y}(\b{x}_i)$ close to $\b{y}_i$ for the $i$-th training point:
\begin{equation}
\begin{aligned}
& \underset{\b{\alpha}_j\text{'s}}{\text{minimize}}
& & \sum_{i=1}^n ||\b{y}_i - \b{y}(\b{x}_i)||_2^2,
\end{aligned}
\end{equation}
where the summation is over the training data points.
We can write this cost function in matrix form as below:
\begin{equation}\label{equation_kernel_tSNE_leastSquares}
\begin{aligned}
& \underset{\b{A}}{\text{minimize}}
& & ||\b{Y} - \b{K}''\b{A}||_F^2,
\end{aligned}
\end{equation}
where $\mathbb{R}^{n \times p} \ni \b{Y} := [\b{y}_1, \dots, \b{y}_n]^\top$ and $\mathbb{R}^{n \times p} \ni \b{A} := [\b{\alpha}_1, \dots, \b{\alpha}_n]^\top$. 
The $\b{K}'' \in \mathbb{R}^{n \times n}$ is the kernel matrix whose $(i,j)$-th element is defined to be:
\begin{align}
\b{K}''(i,j) := \frac{k(\b{x}_i, \b{x}_j)}{\sum_{\ell=1}^n k(\b{x}_i, \b{x}_{\ell})}.
\end{align}
The Eq. (\ref{equation_kernel_tSNE_leastSquares}) is always non-negative; thus, its smallest value is zero.
Therefore, the solution to this equation is:
\begin{align}
\b{Y} - \b{K}''\b{A} = \b{0} &\implies \b{Y} = \b{K}''\b{A} \nonumber \\
&\overset{(a)}{\implies} \b{A} = \b{K}''^{\dagger}\, \b{Y}, \label{equation_kernel_tSNE_A_matrix}
\end{align}
where $\b{K}''^{\dagger}$ is the pseudo-inverse of $\b{K}''$:
\begin{align}
\b{K}''^{\dagger} = (\b{K}''^\top \b{K}'')^{-1} \b{K}''^\top,
\end{align}
and $(a)$ is because $\b{K}''^{\dagger}\,\b{K}'' = \b{I}$.

Finally, the mapping of Eq. (\ref{equation_kernel_tSNE_map}) for the $n_t$ out-of-sample data points is:
\begin{align}\label{equation_kernel_tSNE_outOfSample_Y}
\b{Y}_t = \b{K}''_t\,\b{A}, 
\end{align}
where the $(i,j)$-th element of the out-of-sample kernel matrix $\b{K}''_t \in \mathbb{R}^{n_t \times n}$ is:
\begin{align}
\b{K}''_t(i,j) := \frac{k(\b{x}_i^{(t)}, \b{x}_j)}{\sum_{\ell=1}^n k(\b{x}_i^{(t)}, \b{x}_{\ell})},
\end{align}
where $\b{x}_i^{(t)}$ is the $i$-th out-of-sample data point, and $\b{x}_j$ and $\b{x}_{\ell}$ are the $j$-th and $\ell$-th training data points.

\section{Other Variants of Maximum Variance Unfolding}\label{section_MVU_other_variants}

\subsection{Action Respecting Embedding}

Most of dimensionality reduction methods, including MVU, do not care about the order of points. However, some data may consist of temporal information; for example, the frames of a video where the order of images matters. In this case, a variant of MVU is required which considers the temporal information when unfolding the manifold of data. Action Respecting Embedding (ARE) \cite{bowling2005action} is an MVU variant caring about order of points. It has various applications in reinforcement learning and robotics \cite{bowling2007subjective}. 
ARE has the same constraints in the optimization of MVU plus one additional constraint about temporal information. 
Assume two actions (e.g., rotation or transformation of image or a combination of rotation and transformation), denoted by $a_i$ and $a_j$, are applied on data points $\b{x}_i$ and $\b{x}_j$ to result in $\b{x}_{i+1}$ and $\b{x}_{j+1}$, respectively:
\begin{align*}
& \b{x}_i \overset{a_i}{\longrightarrow} \b{x}_{i+1}, \\
& \b{x}_j \overset{a_j}{\longrightarrow} \b{x}_{j+1}.
\end{align*}
Consider the same actions in the embedding space of MVU, i.e. the feature space:
\begin{align*}
& \b{\phi}(\b{x}_i) \overset{a_i}{\longrightarrow} \b{\phi}(\b{x}_{i+1}), \\
& \b{\phi}(\b{x}_j) \overset{a_j}{\longrightarrow} \b{\phi}(\b{x}_{j+1}).
\end{align*}
If the two actions $a_i$ and $a_j$ are equal, the distances of embedded points should remain the same before and after the action:
\begin{align}
&a_i = a_j \implies \|\b{\phi}(\b{x}_i) - \b{\phi}(\b{x}_j)\|_k^2 \nonumber \\
&~~~~~~~~~~~~~~~~~~~~~~~~~ = \|\b{\phi}(\b{x}_{i+1}) - \b{\phi}(\b{x}_{j+1})\|_k^2 \nonumber \\
&\overset{(a)}{\implies} \b{K}_{ii} + \b{K}_{jj} - 2\b{K}_{ij} \nonumber \\
&~~~~~~~~~~~~~~~~ = \b{K}_{{i+1},{i+1}} + \b{K}_{{j+1},{j+1}} - 2\b{K}_{{i+1},{j+1}},
\end{align}
where $(a)$ is because of distance in the feature space \cite{scholkopf2001kernel,ghojogh2021reproducing}.
Finally, given actions $\{a_i\}_{i=1}^n$, the optimization in ARE is:
\begin{equation}\label{}
\begin{aligned}
& \underset{\b{K}}{\text{maximize}} \quad \textbf{tr}(\b{K}) \\
& \text{subject to} \\
& \b{K}_{ii} + \b{K}_{jj} - 2\, \b{K}_{ij} = \b{G}_{ii} + \b{G}_{jj} - 2\, \b{G}_{ij}, \\
& ~~~~~~~~\qquad\qquad\qquad\qquad\qquad \; \forall i,j \in \{1, \ldots, n \}, \\
& \b{K}_{ii} \!+\! \b{K}_{jj} \!- 2\b{K}_{ij} = \b{K}_{{i+1},{i+1}} \!+\! \b{K}_{{j+1},{j+1}} \!- 2\b{K}_{{i+1},{j+1}}, \\
& ~~~~~~~~\qquad\qquad\qquad\qquad\qquad \; \forall i,j : a_i = a_j, \\
& \sum_{i=1}^n \sum_{j=1}^n \b{K}_{ij} = 0, \\
& \b{K} \succeq \b{0},
\end{aligned}
\end{equation}
which is a SDP problem. The solution of this problem gives a kernel for unfolding the manifold of data where the temporal information of actions is taken into account. After finding the kernel from optimization, the unfolded embedding is calculated by Eq. (\ref{equation_embedding_eigenvector_of_kernel}).

\subsection{Relaxed Maximum Variance Unfolding}

Two problems exist in MVU which are addressed in relaxed MVU \cite{hou2008relaxed}. In the following, we explain these problems and how relaxed MVU resolves them.

\subsubsection{Short Circuits in $k$NN Graph}

\begin{figure}[!t]
\centering
\includegraphics[width=3.2in]{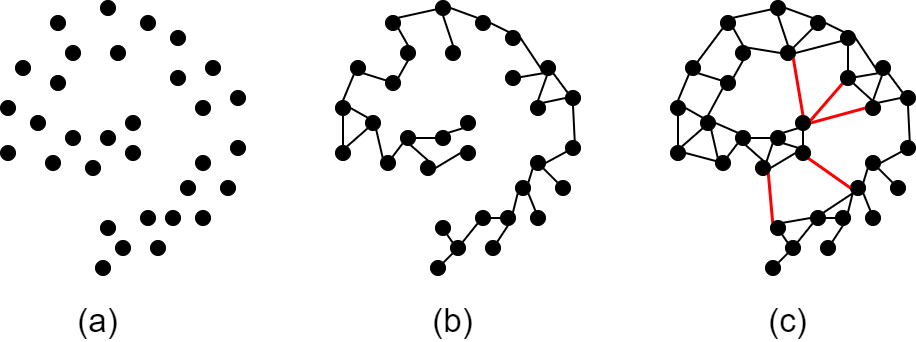}
\caption{Visualizing the possible problem of short circuit: (a) data points lying on a Swiss roll manifold, (b) Correct $k$NN graph for the manifold, (c) and incorrect $k$NN graph having short circuits (shown by red edges) by some larger value for $k$.}
\label{figure_short_circuit}
\end{figure}

For some $k$ values, some short circuits may happen in $k$NN graph. For example, see Fig. \ref{figure_short_circuit}. These short circuits result in incorrect unfolding of manifold. 
Let $k\text{NN}(\b{x}_i)$ denote the set of $k$ nearest neighbor points of $\b{x}_i$. Consider a $k$NN of data where $v(i,j)$ is an edge between $\b{x}_i$ and $\b{x}_j$ in this graph. We define the deviation of an edge $v(i,j)$ as \cite{hou2008relaxed}:
\begin{align}
d(v(i,j)) := &\frac{1}{| k\text{NN}(\b{x}_i) \cup k\text{NN}(\b{x}_j) |} \times \nonumber \\
&\sum_{ \b{x}_l \in k\text{NN}(\b{x}_i) \cup k\text{NN}(\b{x}_j) } \|\b{x}_l - \b{x}_{ij}^\text{(mid)}\|_2^2,
\end{align}
where $\b{x}_{ij}^\text{(mid)} := (\b{x}_i + \b{x}_j) / 2$ and $|.|$ is the size of set. This deviation is related to the density of points; that is, the lower the density, the larger the deviation. We sort the deviation of points from smallest to largest and can discard the points from $k$NN which are larger than a threshold. A scree plot can be used for finding the suitable threshold. 

\subsubsection{Rescaling Local Distances}

In some cases, the mapping is conformal but not isometric. Conformal maps are locally isometric but up to a scale. Let $s(\b{x}_i)$ denote the average distance of $\b{x}_i$ to its $k$ nearest neighbors. Assuming that the original sampling in the input space is uniform, the conformal scaling factor between points $\b{x}_i$ and $\b{x}_j$ is $(s(\b{x}_i) s(\b{x}_j))^{1/2}$ \cite{de2003global}. Relaxed MVU shows to be robust to violation of this assumption \cite{hou2008relaxed}. Relaxed MVU scales the distances of points in the local isometry constraint:
\begin{align}
&\tau_{ij} (\b{K}_{ii} + \b{K}_{jj} - 2\, \b{K}_{ij}) \nonumber \\
&\quad\quad = \tau_{ij} \|(s(\b{x}_i) s(\b{x}_j))^{1/2} (\b{x}_i - \b{x}_j) \|_2^2.
\end{align}

\subsection{Landmark Maximum Variance Unfolding for Big Data}

As explained in Section \ref{section_semidefinite_programming_background}, MVU uses interior-point method which is slow especially for big data. Landmark SDE or Landmark MVU \cite{weinberger2005nonlinear} uses randomly selected landmarks from data points. Let $n$ and $m \ll n$ denote the total number of points and the number of landmarks, respectively, and $\{\b{\mu}_\alpha\}_{\alpha=1}^m \subset \{\b{x}_i\}_{i=1}^n$ be the landmarks.
Every point can be reconstructed linearly by the landmarks:
\begin{align}
\widehat{\b{x}}_i = \sum_{\alpha=1}^m q_{i\alpha}\, \b{\mu}_\alpha,
\end{align}
where $q_{i\alpha}, \forall i, \alpha$ are the reconstruction weights. 
Inspired by Locally Linear embedding (LLE) \cite{ghojogh2020locally}, every embedded point should also be reconstructed from the embedded landmarks, denoted by $\{\b{\ell}_\alpha\}_{\alpha=1}^m = \{\b{\phi}(\b{\mu}_\alpha)\}_{\alpha=1}^m$, with the same reconstruction weights:
\begin{align}\label{equation_landmark_MVU_y_hat}
\widehat{\b{y}}_i = \sum_{\alpha=1}^m q_{i\alpha}\, \b{\ell}_\alpha.
\end{align}
Kernel can be stated as inner product of points in the feature space \cite{ghojogh2021reproducing}:
\begin{align}
&\b{K}_{ij} = \b{\phi}(\b{x}_i)^\top \b{\phi}(\b{x}_j) \overset{(\ref{equation_MVU_embedding_in_RKHS})}{=} \b{y}_i^\top \b{y}_j \implies \b{K}_{ij} \approx \widehat{\b{y}}_i^\top \widehat{\b{y}}_j. \nonumber \\
&\overset{(\ref{equation_landmark_MVU_y_hat})}{\implies} \b{K}_{ij} \approx q_{i \alpha} \b{\ell}_\alpha^\top \b{\ell}_\beta q_{i \beta} \implies \b{K} = \b{V} \b{L} \b{V}^\top, \label{equation_landmark_MVU_K_decompose}
\end{align}
where $\b{Q}_{i\alpha} = q_{i \alpha}$, $\b{Q} \in \mathbb{R}^{n \times m}$, and $\b{L} = \b{\ell}_\alpha^\top \b{\ell}_\beta \in \mathbb{R}^{m \times m}$.
This decomposition of $\b{L}$ shows that $\b{L} \succeq \b{0}$ which will be used as one of the constraints in optimization.
Now, consider linear reconstruction of points by all points in the training set:
\begin{align}
\min_{w_{ij}}\quad \sum_{i=1}^n \|\b{x}_i - \sum_{j=1}^n w_{ij} \b{x}_j\|_2^2,
\end{align}
where $w_{ij}$'s are the reconstruction weights. 
This optimization also exists in LLE and can be restated as \cite{ghojogh2020locally}:
\begin{align}\label{equation_landmarkMVU_min_reconstruction_error_all_points}
\min_{\b{M}}\quad \sum_{i=1}^n \sum_{j=1}^n \b{M}_{ij} \b{x}_i \b{x}_j,
\end{align}
where $\mathbb{R}^{n \times n} \ni \b{M} := (\b{I} - \b{W})^\top (\b{I} - \b{W})$ where $\b{W}_{ij} = w_{ij}$.
The matrix $\b{M}$ can be found by solving the least squares problem in Eq. (\ref{equation_landmarkMVU_min_reconstruction_error_all_points}). 
Considering the $m \ll n$ landmarks, we can decompose the matrix $\b{M}$ as:
\begin{align}\label{equation_landmarkMVU_M_decompose}
\b{M} \approx 
\left[
\begin{array}{c|c}
\b{M}^{ll} \in \mathbb{R}^{m \times m} & \b{M}^{lu} \in \mathbb{R}^{m \times (n-m)} \\
\hline
\b{M}^{ul} \in \mathbb{R}^{(n-m) \times m} & \b{M}^{uu} \in \mathbb{R}^{(n-m) \times (n-m)}
\end{array}
\right].
\end{align}
As the reconstruction weights $q_{i\alpha}, \forall i, \alpha$ can be seen as a subset of reconstruction weights $w_{ij}$, we can write $\b{Q}$ using the parts of matrix $\b{M}$ \cite{weinberger2005nonlinear}:
\begin{align}\label{equation_landmark_MVU_V}
\mathbb{R}^{n \times m} \ni \b{Q} = 
\begin{bmatrix}
\b{I}_{m \times m}\\
(\b{M}^{uu})^{-1} \b{M}^{ul}
\end{bmatrix}.
\end{align}
Note that this usage of a small part of matrix (because $m \ll n$) is inspired by the Nystr\"om method \cite{ghojogh2021reproducing}.

Landmark MVU solves the following SDP optimization problem \cite{weinberger2005nonlinear}:
\begin{equation}\label{equation_landmark_MVU_optimization}
\begin{aligned}
& \underset{\b{L}}{\text{maximize}}
& & \textbf{tr}(\b{Q}\b{L}\b{Q}^\top) \\
& \text{subject to}
& & \tau_{ij} ((\b{Q}\b{L}\b{Q}^\top)_{ii} + (\b{Q}\b{L}\b{Q}^\top)_{jj} - 2\, (\b{Q}\b{L}\b{Q}^\top)_{ij}) \\
& & & \quad\quad\quad \leq \tau_{ij} (\b{G}_{ii} + \b{G}_{jj} - 2\, \b{G}_{ij}), \\
& & & ~~~~~~~~\qquad\qquad\quad \; \forall i,j \in \{1, \ldots, n\}, \\
& & & \sum_{i=1}^n \sum_{j=1}^n (\b{Q}\b{L}\b{Q}^\top)_{ij} = 0, \\
& & & \b{L} \succeq \b{0},
\end{aligned}
\end{equation}
where the optimization variable is the small matrix $\b{L} \in \mathbb{R}^{m \times m}$ rather than the large matrix $\b{K} \in \mathbb{R}^{n \times n}$; hence, a big problem is reduced to an efficient small one. Note that the paper \cite{weinberger2005nonlinear} has converted the equality of local isometry constraint to inequality. This is okay because it has made the constraint more restricted and harder than the equality constraint. After solving the optimization problem to find the optimal $\b{L}$, we use Eqs. (\ref{equation_landmark_MVU_V}) and (\ref{equation_landmark_MVU_K_decompose}) to calculate $\b{K}$ and then embedding is calculated using Eq. (\ref{equation_embedding_eigenvector_of_kernel}).

\subsection{Other Improvements over Maximum Variance Unfolding and Kernel Learning}

An existing book chapter on MVU is {\citep[Chapter 9]{wang2012geometric}}.
There have been other improvements over MVU. 
There exist some other improvements over MVU and kernel learning by SDP whose details we do not cover in this paper. We list these improvements here.
An application of MVU in nonlinear process monitoring can be found in \cite{liu2014nonlinear}. 
SDP has also been used for kernel matrix completion \cite{graepel2002kernel} and low-rank kernel learning \cite{kulis2006learning}. 
Maximum covariance unfolding \cite{mahadevan2011maximum} has been proposed for bimodal manifold unfolding. 
We can also interpret MVU as a regularized shortest path problem on the graph of data \cite{paprotny2012connection}; hence, it can be related to the Isomap algorithm \cite{ghojogh2020multidimensional}.

\section{Conclusion}\label{section_conclusion}

In this paper, we first explained how the spectral dimensionality reduction methods can be unified as kernel PCA with different kernels using eigenfunction learning and kernel construction by distance matrices. Then, we said as the spectral methods are unified as kernel PCA, let us learn the best kernel for unfolding the manifold of data. We introduced MVU and its variants as some methods for learning the best kernel for manifold unfolding using SDP optimization problems.

\section*{Acknowledgement}

About the background on semidefinite programming, great videos of Convex Optimization I and II by Prof. Stephen Boyd exist in the channel of Stanford University on YouTube. 
Videos on unifying spectral methods as well as MVU and ARE exist by Prof. Ali Ghodsi at University of Waterloo available on YouTube.


\bibliography{References}
\bibliographystyle{icml2016}

\end{document}